\pdfoutput=1

\documentclass[11pt]{article}
\usepackage[table]{xcolor}

\usepackage{ACL_format/acl}

\usepackage{times}
\usepackage{latexsym}
\usepackage{amsmath}
\usepackage{amssymb}
\usepackage{mathtools}
\usepackage{amsthm}
\usepackage{algorithm}
\usepackage{algorithmic} % Use algpseudocode package
\usepackage{multirow}
\usepackage{booktabs} % For \toprule, \midrule, etc.
\usepackage{subcaption}
\usepackage{enumitem}

\usepackage[T1]{fontenc}
\usepackage[utf8]{inputenc}
\usepackage{microtype}
\usepackage{inconsolata}
\usepackage{graphicx}

\usepackage{amsthm} 
\newtheorem{theorem}{Theorem}
\newtheorem{proposition}{Proposition}

\theoremstyle{definition}
\newtheorem{definition}{Definition}

\theoremstyle{remark}

\title{STAMP: Selective Task-Aware Mechanism for Text Privacy} 

\author{
  Fengwei Tian$^1$, Payel Bhattacharjee$^1$, Heidi Hanson$^2$, \\
  \textbf{Geoffrey D. Rubin$^1$, Joseph Y. Lo$^3$, Ravi Tandon$^1$} \\
  \\
  $^1$University of Arizona, 
  $^2$Oak Ridge National Laboratory,  
  $^3$Duke University \\
  \texttt{\{fengtian, payelb, grubin, tandonr\}@arizona.edu,}\\
  \texttt{hansonha@ornl.gov, joseph.lo@duke.edu}
}

\begin{document}

\maketitle

\begin{abstract}
    We present \textsc{STAMP} (Selective Task-Aware Mechanism for Text Privacy), a new framework for task-aware text privatization that achieves an improved privacy–utility trade-off. \textsc{STAMP} selectively allocates privacy budgets across tokens by jointly considering
    (i) each token’s importance to the downstream task (as measured via a task- or query-specific representation), and
    (ii) its privacy sensitivity (e.g., names, dates, identifiers).
    This token-level partitioning enables fine-grained, group-wise control over the level of noise applied to different parts of the input, balancing privacy protection with task relevance.
    To privatize individual token embeddings, we introduce the polar mechanism, which perturbs only the direction of embeddings on the unit sphere while preserving their magnitude. Decoding is performed via cosine nearest-neighbor search, aligning the perturbation geometry with the decoding geometry. Unlike isotropic noise mechanisms, the polar mechanism maintains semantic neighborhoods in the embedding space and better preserves downstream utility.
    Experimental evaluations on SQuAD, Yelp, and AG News datasets demonstrate that \textsc{STAMP}, when combined with the normalized polar mechanism, consistently achieves superior privacy–utility trade-offs across varying per-token privacy budgets.    
        
\end{abstract}

\section{Introduction}
    Modern large language models (LLMs) routinely operate on user-supplied text that may contain identifying or otherwise privacy-sensitive content.
    Practical deployments therefore require client-side protection mechanisms that preserve task utility while preventing the unintended disclosure of sensitive text within the input \cite{yan2024protecting, pan2020privacy}.
    This setting encompasses both \textit{inference-time context privacy}—where the user’s text is privatized immediately before being sent to a remote model—and \textit{privacy-preserving rewriting}, where the text is locally transformed prior to transmission, storage, sharing, or other downstream use. 
    In this work, we adopt the framework of local differential privacy (LDP)~\cite{arachchige2019local}, wherein randomization occurs locally at the user side, ensuring that the server, model owner, or any downstream observer only sees a privatized version of the text.
    Informally, the released text should not allow an observer to reliably determine whether any specific token was present in the original text.
    \vspace{5pt}
    
    \noindent \textbf{\textit{The Need for Selective, Task-Aware Privacy}.} 
    Prior approaches to local text privatization face several fundamental limitations.
    Classical randomized response \cite{warner1965randomized} achieves privacy by randomly replacing each input token with another from the same domain.
    In natural language, such random substitutions often produce unnatural or incoherent text, severely degrading utility.
    Adding coordinate-wise Laplace noise (or isotropic Gaussian noise) \cite{feyisetan2021private, feyisetan2020privacy} to embedding vectors is similarly problematic: semantic embeddings are not uniformly sensitive—small perturbations in some directions can flip meanings, while large perturbations in others have negligible effect.
    Restricting noise to direction-only perturbations helps preserve magnitude \cite{weggenmann2021differential}, but when applied uniformly and decoded using mismatched rules, it still distorts fine-grained semantic relations.
    Applying uniform privacy budgets across all tokens compounds the problem, as it perturbs both innocuous tokens and semantically crucial ones with equal intensity.
    
    \begin{figure*}[t]
        \centering
        \includegraphics[scale = 0.24]{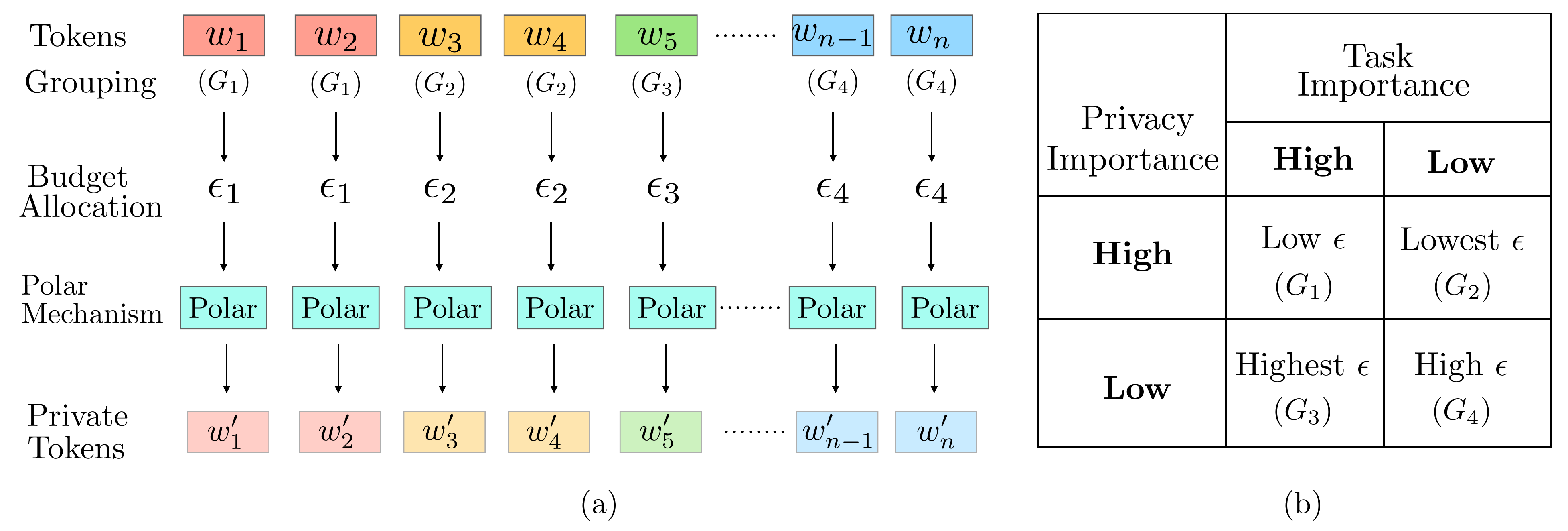}
        \caption{\textbf{Overview of the \textsc{STAMP}  framework}: Tokens are categorized according to their task and privacy relevance, and then perturbed using adaptively assigned privacy budgets via the polar mechanism. Figure (a) (left) illustrates the overall token perturbation pipeline with group-wise privacy budget allocation; (b) (right) details the grouping and budget assignment process based on task and privacy importance.}
        \label{tab:stamp_grid}
    \end{figure*}

    \noindent A principled mechanism must instead be selective, tailoring privacy to each \textit{token’s sensitivity} and \textit{importance to the task}. 
    Prior work has attempted selectivity using intrinsic linguistic heuristics—such as frequency, part of speech, or type-level information content \cite{meisenbacher2024thinking, meisenbacher2024spend} —but these ignore the downstream task context.
    A token may be essential for one query yet irrelevant for another, so its importance varies dynamically across contexts.
    Such static partitions misallocate privacy, leading to unnecessary noise or insufficient protection.
    For example, in a question-answering task, the token ``\texttt{Einstein}” is essential when answering ``\texttt{Who developed the theory of relativity?}” but irrelevant when the query is ``\texttt{When was the Nobel Prize established?}”.
    In contrast, in a customer-support context, a token such as ``\texttt{Alice Johnson}” or ``\texttt{ID: 123-45-6789}” may be highly sensitive yet irrelevant to the task objective, warranting stronger perturbation despite its low task importance.
    
%As illustrated in Figure~\ref{fig:teaser}, sensitive and task-relevant spans warrant tighter protection, whereas incidental tokens can tolerate greater perturbation.

    \noindent \textbf{Summary of Contributions.}  
    To address the above challenges, this work introduces \textsc{\textbf{STAMP}} (\emph{Selective Task-Aware Mechanism for Privacy}), 
    a framework for \emph{task-aware text privatization} that provides fine-grained control over privacy--utility trade-offs.  
    Our key contributions are as follows:  
    
    \begin{enumerate}[leftmargin=*, topsep=1pt, itemsep=3pt]
        \item \textbf{Selective, task-aware privacy allocation.}  
        \textsc{Stamp} partitions tokens based on two complementary dimensions:  
        (i) their \emph{importance to the downstream task}, derived from task- or query-specific representations, and  
        (ii) their \emph{privacy sensitivity}, such as the presence of identifiers, names, or dates.  
        This joint assessment enables \emph{group-wise privacy budgeting} that balances utility preservation and privacy protection.  
    
        \item \textbf{Geometry-aligned perturbation via the Polar Mechanism.}  
        We introduce the \emph{polar mechanism}, which privatizes token embeddings by perturbing only their \emph{direction} on the unit sphere while preserving magnitude.  
        Decoding is performed using \emph{cosine nearest-neighbor search}, aligning the perturbation geometry with the decoding geometry.  
        Unlike isotropic Gaussian or Laplace noise, this approach preserves \emph{semantic neighborhoods} and minimizes distortion of meaning.  
    
        \item \textbf{Comprehensive empirical evaluation.}  
        Experiments on \emph{SQuAD}, \emph{Yelp}, and \emph{AG News} demonstrate that \textsc{Stamp}, combined with the normalized polar mechanism, consistently achieves \emph{superior privacy--utility trade-offs} across a range of per-token privacy budgets.  
    \end{enumerate}

\section{Preliminaries and Related Work}
\label{sec:prelim-related}
\subsection{Preliminaries on LDP for Text Inputs}  
    
We consider the problem of \emph{inference-time context privacy}, where a user wishes to privatize their text input before sending it to a remote model (e.g., an LLM) for inference.   Given a user-supplied context \(c = (w_1, \ldots, w_n)\) consisting of tokens from a finite vocabulary \(\mathcal{V}\), a \emph{local privatization mechanism} \(\mathcal{M}\) operates to produce a privatized version \(\tilde{c}\).  
The server or model owner observes only \(\tilde{c}\), not the original context.  The objective is to preserve task utility for the user while limiting what can be inferred about the presence or absence of sensitive tokens or spans in \(c\).  This formulation also naturally extends to \emph{privacy-preserving text rewriting}, where the same mechanism is applied before text storage, sharing, or other downstream use.  

Each token \(w \in \mathcal{V}\) has an embedding \(e(w) \in \mathbb{R}^d\), and we denote its unit-normalized direction by \(\hat{e}(w) = e(w) / \|e(w)\|_2\).  
A token-level mechanism is a mapping \(\mathcal{M}: \mathcal{V} \to \mathcal{Y}\) that outputs a randomized token before transmission.  
For a context \(c = (w_1, \ldots, w_n)\), we write  
\[
\mathcal{M}^{(n)}(c) = \big(\mathcal{M}(w_1), \ldots, \mathcal{M}(w_n)\big),
\]
indicating that the mechanism \(\mathcal{M}\) is applied independently to each token.  

\begin{definition}[$(\epsilon,\delta)$-LDP (token level)]
\label{def:ldp-token}
Let \(\mathcal{V}\) be the token vocabulary and \(\mathcal{Y}\) the output space (e.g., privatized tokens).  
A randomized mechanism \(\mathcal{M}:\mathcal{V}\to\mathcal{Y}\) satisfies \((\epsilon,\delta)\)\emph{-local differential privacy (LDP)} if, for every pair of tokens \(w, w'\in\mathcal{V}\) and every measurable set \(S\subseteq\mathcal{Y}\),
\[
\Pr[\mathcal{M}(w)\in S] \le e^{\epsilon}\Pr[\mathcal{M}(w')\in S] + \delta,
\]
where \(\epsilon \ge 0\) is the privacy budget and \(\delta\) is a small failure probability.
\end{definition}

\noindent This definition ensures that observing the privatized token does not allow an adversary to reliably infer whether the original input was \(w\) or \(w'\); smaller \(\epsilon\) (and small \(\delta\)) imply stronger privacy guarantees.

\medskip
\noindent To better capture linguistic similarity, we also consider a distance-aware relaxation—\emph{metric LDP}—which scales indistinguishability according to a metric \(d\) between tokens:

\begin{definition}[$(\epsilon,\delta)$-metric LDP (token level)]
\label{def:metric-ldp-token}
Let \(\mathcal{V}\) be the token vocabulary and \(d:\mathcal{V}\times\mathcal{V}\to\mathbb{R}_{\ge 0}\) a metric on tokens.  
A randomized mechanism \(\mathcal{M}:\mathcal{V}\to\mathcal{Y}\) satisfies \((\epsilon,\delta)\)\emph{-metric LDP} with respect to \(d\) if, for all \(w,w'\in\mathcal{V}\) and all measurable \(S\subseteq\mathcal{Y}\),
\[
\Pr[\mathcal{M}(w)\in S] \le e^{\,\epsilon\, d(w,w')}\Pr[\mathcal{M}(w')\in S] + \delta.
\]
\end{definition}

\noindent Metric LDP is particularly well suited for text, as it allows stronger protection for pairs of semantically similar tokens (small \(d\)) while relaxing constraints for distant pairs.  
In the context of text, the metric \(d(\cdot,\cdot)\) can be instantiated using \emph{semantic or geometric similarity} between embeddings—such as cosine distance, Euclidean distance, or distances derived from contextualized representations \cite{chatzikokolakis2013broadening, chen2022customized}. 
This formulation captures the intuition that semantically similar tokens (e.g., ``\texttt{doctor}'' and ``\texttt{physician}'') should be harder to distinguish than unrelated ones (e.g., ``\texttt{doctor}'' and ``\texttt{banana}'').  
By embedding this structure directly into the privacy constraint, metric LDP aligns the notion of privacy with the geometry of language, providing a natural foundation for designing mechanisms that operate in embedding space.

    \begin{figure*}
        \centering
        \includegraphics[scale = 0.22]{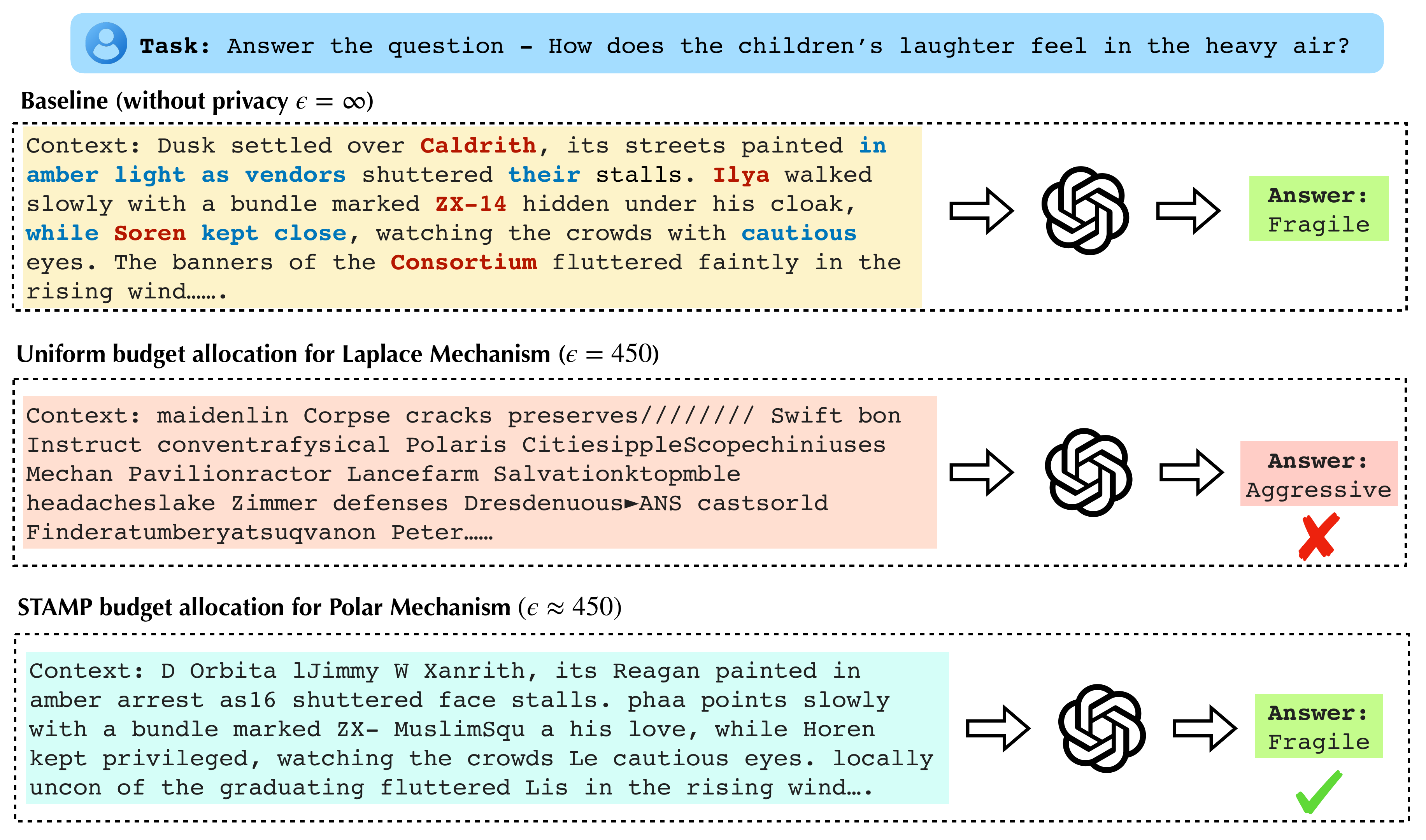}
        \caption{\textbf{Prompt-based question answering example }(in the baseline context, \textcolor{red}{\bf{red}} bold text indicates privacy-sensitive tokens; \textcolor{blue}{\bf{blue}} bold text denotes task-relevant text): 
        Uniform perturbation treats all tokens equally, wasting privacy on uninformative words and under-protecting sensitive but predictive ones. 
        \textsc{STAMP}  addresses this by stratifying tokens by privacy sensitivity and task relevance, and by aligning the privatization geometry with the decoding geometry.}
        \label{fig:example}
        \vspace{-5pt}
    \end{figure*}
    % This is fgure 2, moved here by force

\subsection{Related Work on Text Privacy}
\label{sec:motivation-related} 
    Previous studies have shown that some words directly reveal sensitive information—personal names, email addresses, company identifiers, locations—while others (e.g., stop words) pose little privacy risk and minimal task relevance \cite{li2023privacy, meisenbacher2024just}.
    Existing approaches for privatizing text often ignore this heterogeneity \cite{utpala2023locally, igamberdiev2023dp}: they either distort embedding spaces in ways that erase useful structure, or allocate privacy budgets based on language intrinsic properties without regard to task relevance \cite{meisenbacher2024spend}.
    \\
    \noindent \textbf{Isotropic embedding perturbation.} A common approach for perturbing embeddings involves adding isotropic Gaussian or Laplace noise \cite{igamberdiev2023dp}. While this leverages continuous representations, it implicitly assumes isotropy; yet empirical work shows strong \emph{anisotropy} in embedding spaces—some directions encode fine semantic distinctions while others capture frequency or style \cite{mu2017all, ethayarajh2019contextual}. 
    Uniform noise disregards this structure and can collapse distinctions embeddings are meant to preserve.
    
    \noindent \textbf{Directional privatization.} There exists methods on perturbing directions on the unit sphere, such as von Mises–Fisher (vMF)–based mechanisms and spherical-cap schemes, which offer DP or LDP guarantees under angular metrics \cite{weggenmann2021differential}. Directional noise has also been explored for privatizing gradient directions in deep learning models \cite{faustini2022directional}.
    In NLP, word-level metric-DP methods often inject noise in embedding space and then \emph{decode} by cosine nearest-neighbor back to tokens \cite{arnold2023driving,carvalho2023tem}.
    However, when directional noise is applied \emph{uniformly}—without task regard to the current task—and decoded in a mismatched geometry, it can still disrupt fine-grained relations needed downstream. 
    
    \noindent \textbf{Uniform budget allocation.}
    In prior works, a common simplification has been to allocate privacy budgets evenly across tokens \cite{carvalho2023tem}.
    This simplification ignores the fact that some tokens—such as named entities or specialized domain terms—are simultaneously highly sensitive and critical for model accuracy, while others contribute little to either privacy or utility \cite{feyisetan2020privacy, feyisetan2021private, xu2020differentially}.
    Recent work \cite{meisenbacher2024spend} redistributes the privacy budget $\epsilon$ using scores from intrinsic text properties (information content, POS tags, semantic deviation within context $C$), but remains largely \emph{task-agnostic}.

    % In contrast, our framework, \textsc{STAMP} , as described next explicitly models both a token’s \emph{privacy sensitivity} and its \emph{task relevance}, enabling selective and geometry-aligned privatization that better preserves downstream utility.
    % \textcolor{red}{Furthermore, while \textsc{STAMP} is instantiated here with the Polar mechanism to leverage geometric properties, it is a modular framework; other discrete or substitution-based mechanisms \cite{carvalho2023tem, feyisetan2020privacy} could in principle be integrated into the allocation scheme.}
    
    In contrast, STAMP addresses these limitations by explicitly modeling both a token’s privacy sensitivity and its task relevance. 
    Rather than treating privacy as a uniform property of the entire sequence, STAMP reflects the view that privacy is not merely an intrinsic property of text, but a contextual choice regarding which attributes must be concealed versus which are essential for utility. 
    Consequently, this enables a selective privatization that achieves a superior privacy-utility trade-off. 
    Furthermore, we rigorously distinguish the \textit{allocation strategy} from the \textit{perturbation mechanism}. 
    While STAMP is instantiated here with the Polar mechanism to leverage the directional geometry of embedding spaces, it is inherently a modular framework; other discrete or substitution-based mechanisms \citep{carvalho2023tem, feyisetan2020privacy} could in principle be integrated into the allocation scheme, provided they expose a tunable parameter for perturbation strength. 
    In this way, STAMP offers a generalizable principle for budget distribution that is orthogonal to the specific method of noise injection.
    
% This paragraph starts on top right of third page
\section{The \textsc{STAMP}  Framework}

    Unlike mechanisms that allocate a uniform privacy budget, \textsc{Stamp} partitions tokens along two binary axes: \emph{explicit sensitivity} (e.g., PII or NER labels) and \emph{task-dependent importance}.  
    While sensitivity can often be identified from metadata or entity type, importance depends on the specific downstream task---the same token or phrase may be crucial for one query yet irrelevant for another.  
    \textsc{Stamp} captures this by grouping tokens according to their similarity to a task or query representation.

    This task-aware selective design yields four token categories (illustrated in Table~\ref{tab:stamp_grid}) governed by two principles:  
    (i) sensitive tokens receive the strongest protection, and  
    (ii) among non-sensitive tokens, those with higher task importance are perturbed the least to preserve utility.  
    \textsc{Stamp} assigns each token $w$ to a group $c = g_T(w) \in \{1,2,3,4\}$ via a public, task-specific mapping $g_T$, and allocates privacy budgets at the \emph{group} level.  
    This selective allocation protects consequential attributes without erasing essential task information.  
    Formally, \textsc{Stamp} instantiates a task-aware metric-LDP mechanism in which budgets are assigned and perturbations applied within each group.
    
    \begin{definition}[Task-Aware Metric LDP]
    \label{def:task-cat-mldp}
        Let $T$ be a downstream task with a vocabulary partition 
        $\{\mathcal{V}_1^T, \mathcal{V}_2^T, \mathcal{V}_3^T, \mathcal{V}_4^T\}$.  
        A randomized mechanism $\mathcal{M}:\mathcal{V}\to\mathcal{Y}$ satisfies 
        \emph{task-aware metric LDP} with budget vector 
        $\boldsymbol{\epsilon}^T = (\epsilon_T^{(1)}, \epsilon_T^{(2)}, \epsilon_T^{(3)}, \epsilon_T^{(4)})$ 
        if, for every group $c \in \{1,2,3,4\}$, all tokens $w, w' \in \mathcal{V}_c^T$, and all measurable sets $S \subseteq \mathcal{Y}$,
        \[
            \Pr[\mathcal{M}(w)\in S] \le e^{\epsilon_T^{(c)} d_u(w,w')} \Pr[\mathcal{M}(w')\in S],
        \]
        where $d_u$ is a distance metric.
    \end{definition}
    
    \begin{theorem}[Privacy Guarantee of \textsc{Stamp}]
    \label{thm:task-mldp}
    Fix a task $T$ and assume that $T$ and its associated grouping map $g_T$ are public.  
    Then \textsc{Stamp} satisfies task-aware metric LDP with budget vector 
    $\boldsymbol{\epsilon}^T = (\epsilon_T^{(1)}, \epsilon_T^{(2)}, \epsilon_T^{(3)}, \epsilon_T^{(4)})$.
    \end{theorem}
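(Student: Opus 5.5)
The proof reduces the statement to a per-group guarantee for the single-token perturbation, followed by post-processing. Fix the task $T$ and the public map $g_T$. Since $g_T$ is public and deterministic, the group label $c=g_T(w)$ reveals nothing beyond what the definition already conditions on. Definition~\ref{def:task-cat-mldp} only compares pairs $w,w'$ lying in the \emph{same} group $\mathcal{V}_c^T$, and for such a pair \textsc{Stamp} applies the same randomized map: the polar mechanism with parameter $\epsilon_T^{(c)}$, followed by cosine nearest-neighbour decoding. It therefore suffices to show that, for each fixed budget $\epsilon$, the polar mechanism $P_\epsilon:\mathcal{V}\to\mathbb{S}^{d-1}$ acting on $\hat e(w)$ satisfies pure metric LDP. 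The bound we want is $\Pr[P_\epsilon(w)\in A]\le e^{\epsilon\, d_u(w,w')}\Pr[P_\epsilon(w')\in A]$, where $d_u$ is the angular (or chordal) distance between $\hat e(w)$ and $\hat e(w')$.

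For this step I would write the output density of the direction perturbation as a von Mises--Fisher-type law on the sphere, $f_w(z)\propto \exp(\kappa\,\langle \hat e(w), z\rangle)$, with $\kappa$ tied to $\epsilon$. The normalizing constant depends only on $\kappa$, not on the mean direction, by rotational invariance of the uniform measure on $\mathbb{S}^{d-1}$, so it cancels in the likelihood ratio. The ratio is then
\[
\frac{f_w(z)}{f_{w'}(z)}=\exp\bigl(\kappa\,\langle \hat e(w)-\hat e(w'), z\rangle\bigr)\le \exp\bigl(\kappa\,\|\hat e(w)-\hat e(w')\|_2\bigr)
\]
by Cauchy--Schwarz with $\|z\|_2=1$. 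Choosing $\kappa=\epsilon$ and $d_u(w,w')=\|\hat e(w)-\hat e(w')\|_2$ gives the pointwise density bound. If $d_u$ is the angular distance instead, the chord is at most the arc, so the bound still holds. Integrating over any measurable $A\subseteq\mathbb{S}^{d-1}$ yields the set-wise inequality with $\delta=0$. Magnitude preservation is harmless here: the norm $\|e(w)\|$ is reattached deterministically, or, if it is released, I would argue the mechanism only uses the direction.

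Next comes post-processing. Decoding by cosine nearest neighbour is a deterministic function of the perturbed direction (ties broken by a fixed rule), so for any $S\subseteq\mathcal{Y}$ its preimage is measurable. The inequality transfers to $\Pr[\mathcal{M}(w)\in S]$. Applying this with $\epsilon=\epsilon_T^{(c)}$ for each group $c$ gives exactly Definition~\ref{def:task-cat-mldp}, proving Theorem~\ref{thm:task-mldp}. If the context-level statement is also wanted, independence across tokens composes the per-token bounds additively, but the theorem as stated is token level.

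The main obstacle is pinning down the exact form of the polar mechanism and its metric, which the paper has not yet specified at this point. If the mechanism samples an angle $\theta$ from a density such as $\propto e^{-\epsilon\theta}\sin^{d-2}\theta$ around $\hat e(w)$, the ratio must instead be controlled via the triangle inequality for geodesic distance, $|\theta(z,w)-\theta(z,w')|\le \angle(\hat e(w),\hat e(w'))$. The $\sin^{d-2}$ Jacobian must also be shown to cancel, since it is the surface-measure factor and does not depend on the centre. I would try the vMF/Cauchy--Schwarz route first and fall back to the geodesic triangle-inequality argument, matching $d_u$ accordingly.
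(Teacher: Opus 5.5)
Your argument is correct and is essentially the paper's proof: you condition on the public, deterministic group label $c=g_T(w)$, observe that same-group tokens pass through the identical vMF channel with $\kappa=\epsilon_T^{(c)}$, and bound the likelihood ratio by $\exp(\kappa\|\hat e(w)-\hat e(w')\|_2)$ via cancellation of $C_d(\kappa)$ and Cauchy--Schwarz, which is the content of Theorem~\ref{thm:vmf-mldp}; your explicit post-processing step for cosine decoding fills in a detail the paper leaves implicit. One correction to an aside: reattaching the norm $\|e(w)\|$ is harmless only because the released object is the cosine-decoded token (or, in the normalized polar mechanism, a unit vector), since if $\|e(w)\|\,u'$ itself were released, tokens with different norms would have mutually singular output laws and no finite bound could hold.
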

    
    \noindent Grouping tokens and perturbing them with task-specific budgets yields $(\epsilon_T^{(c)},0)$-metric LDP for each group.  
    The complete proof appears in Appendix~\ref{app:proofs-task}.  
    In the limiting case where a token’s privacy budget $\epsilon = 0$ (for highly sensitive, low-utility tokens), the mechanism reveals no information about that token, effectively replacing it with a fully masked or placeholder symbol. 
    
    \medskip
    \noindent \textbf{Extending Token-Level Guarantees to Contexts.}  
    While the single-token guarantee secures individual embeddings, real applications release entire sentences or documents.  
    Privacy loss compounds when multiple tokens are observed together, so we extend the guarantee to sequences via standard composition in metric LDP.  
    Applying the mechanism independently to each position extends per-token guarantees to full contexts \cite{wang2020comprehensive, kairouz2015composition}.
    Consider a sequence $w^{(n)} = (w_1, \ldots, w_n)$ with group labels $c_i = g_T(w_i)$.  
    The \textsc{Stamp} mechanism applies independently to each token:
    \[
        \mathcal{M}_T^{(n)}(w^{(n)}) = \big(\mathcal{M}_T(w_1), \ldots, \mathcal{M}_T(w_n)\big).
    \]
    
    \begin{theorem}[Context-Level Privacy Guarantee of \textsc{Stamp}]
    \label{thm:composition}
    For any task $T$, $\mathcal{M}_T^{(n)}$ satisfies task-aware metric LDP with budget vector 
    $(\epsilon_{T,1}^{(c_1)}, \ldots, \epsilon_{T,n}^{(c_n)})$, where $\epsilon_{T,i}^{(c_i)} = \epsilon_T^{(c_i)}$.  
    In particular, for any pair of sequences $w^{(n)}, w'^{(n)}$ with the same group sequence $(c_1, \ldots, c_n)$ 
    and any measurable set $S$,
    \begin{align*}
            & \Pr[\mathcal{M}_T^{(n)}(w^{(n)})\in S] \\
            \le &
            e^{\sum_{i=1}^n \epsilon_T^{(c_i)} d_u(w_i,w'_i)}
            \Pr[\mathcal{M}_T^{(n)}(w'^{(n)})\in S].
        \end{align*}
    \end{theorem}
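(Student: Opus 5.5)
The plan is to derive Theorem~\ref{thm:composition} from the single-token guarantee of Theorem~\ref{thm:task-mldp}, using the fact that $\mathcal{M}_T^{(n)}$ is a product mechanism. Since $g_T$ is public and the group sequence $(c_1,\ldots,c_n)$ is shared by $w^{(n)}$ and $w'^{(n)}$, each coordinate $i$ is perturbed with the same budget $\epsilon_T^{(c_i)}$ under both inputs. In particular, the mechanism applied at position $i$ does not depend on which of the two sequences is being processed. Theorem~\ref{thm:task-mldp} with $\delta=0$ then gives, for each $i$ and every measurable $S_i$,
\[
\Pr[\mathcal{M}_T(w_i)\in S_i]\le e^{\epsilon_T^{(c_i)} d_u(w_i,w_i')}\Pr[\mathcal{M}_T(w_i')\in S_i].
\]

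First I will upgrade this set-wise inequality to a density-ratio bound. Let $P_i$ and $P_i'$ be the output laws of $\mathcal{M}_T(w_i)$ and $\mathcal{M}_T(w_i')$. The inequality above, holding for all $S_i$, says $P_i \le e^{a_i} P_i'$ as measures, with $a_i=\epsilon_T^{(c_i)} d_u(w_i,w_i')$. Hence $P_i\ll P_i'$ and $dP_i/dP_i' \le e^{a_i}$ holding $P_i'$-almost everywhere. (If the polar mechanism has an explicit density, this can be read off directly, but the measure-theoretic route avoids needing one.)

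Second, by independence of the per-token randomness, the output law of $\mathcal{M}_T^{(n)}(w^{(n)})$ is the product measure $\bigotimes_i P_i$, and similarly for $w'^{(n)}$. The Radon--Nikodym derivative of a product is the product of derivatives, so
\[
\frac{d\bigotimes_i P_i}{d\bigotimes_i P_i'}=\prod_i \frac{dP_i}{dP_i'}\le e^{\sum_i a_i}.
\]
Integrating this bound over an arbitrary measurable $S\subseteq\mathcal{Y}^n$ against $\bigotimes_i P_i'$ gives the claimed inequality.

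The main obstacle is that $S$ need not be a rectangle, so one cannot simply multiply the per-coordinate set inequalities. The density argument handles this case. As an alternative I would run a hybrid argument: swap $w_i$ for $w_i'$ one coordinate at a time. At each step, condition on the other coordinates, apply the single-token bound to the section of $S$ via Fubini, and then telescope the resulting $n$ factors.

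Finally, the per-position budget vector statement is immediate once we set $\epsilon_{T,i}^{(c_i)}=\epsilon_T^{(c_i)}$.
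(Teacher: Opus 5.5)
Your proposal is correct and follows the same route as the paper's proof: use independence of the per-position channels, multiply the single-token factors $e^{\epsilon_T^{(c_i)} d_u(w_i,w_i')}$ into $e^{\sum_i \epsilon_T^{(c_i)} d_u(w_i,w_i')}$, and set the per-position budgets to $\epsilon_T^{(c_i)}$ using the shared group sequence. The paper asserts the product bound for an arbitrary measurable $S\subseteq\mathcal{Y}^n$ in one line, so your Radon--Nikodym argument (or the Fubini-based hybrid) supplies the justification for non-rectangular $S$ that the paper leaves implicit.
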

    
    \noindent Since \textsc{Stamp} privatizes each token independently, the sequence-level guarantee follows directly by composition; see Appendix~\ref{app:proofs-comp}.

 \subsection{Group Maps \& Privacy Budget Allocation}

    \noindent \textbf{Designing the Group Map.}  
    The grouping map $g_T$ partitions tokens along two dimensions: explicit sensitivity and task relevance.  
    Sensitivity can be detected using named-entity or PII recognizers (e.g., for names, organizations, or identifiers), 
    while task relevance can be estimated from gradient-based saliency, attention scores, or cosine similarity 
    to a task- or query-specific embedding.  
    Each token $w$ is thus assigned to one of four groups 
    based on its sensitivity and importance for task $T$.  
    Making $g_T$ public simplifies analysis and reproducibility, 
    as the grouping policy itself contains no private information 
    and enables well-defined, composable privacy guarantees.  
    
    \medskip
    \noindent \textbf{Allocating Privacy Budgets.}  
    Once groups are defined, privacy budgets $\boldsymbol{\epsilon}^T$ are assigned according to both sensitivity and relevance.  
    Sensitive tokens (e.g., names or identifiers) receive the strongest protection through smaller budgets, 
    while non-sensitive yet task-critical tokens are granted larger budgets to preserve downstream utility.  
    Tokens that are both non-sensitive and task-irrelevant tolerate stronger perturbations without degrading performance.  A practical rule of thumb is to assign privacy budgets that increase with task importance and decrease with privacy sensitivity. 
    In our experiments, we adopt the following proportional allocation:
    \[
    \epsilon_T^{(1)} : \epsilon_T^{(2)} : \epsilon_T^{(3)} : \epsilon_T^{(4)} = 2 : 1 : 4 : 3,
    \]
    where group 1 contains tokens that are both highly privacy-sensitive and task-critical (thus receiving a moderate budget to balance the trade-off); group 2 includes tokens that are highly sensitive but less important to the task (hence the smallest budget); group 3 corresponds to tokens with lower privacy sensitivity but high task importance (receiving the largest budget); and group 4 covers tokens with low sensitivity and moderate task importance. 
    This allocation provides a principled balance between protecting sensitive content and maintaining utility for downstream tasks.
    We adopt this ratio only as a transparent instantiation of the qualitative ordering: protect sensitive/task-unimportant tokens most and non-sensitive/task-important tokens least. 
    While not claimed to be theoretically optimal, this monotone profile demonstrates that structured allocation can yield improvements over uniform privacy budget allocation baselines.

\section{Polar Mechanism under Metric LDP}
\label{sec:polar}

    In this Section, we present a geometry-aware pipeline for privatizing token embeddings that aligns the \emph{perturbation geometry} with the \emph{decoding geometry}. 
    Rather than injecting isotropic noise in $\mathbb{R}^d$, we factor each embedding into \textbf{radial} and \textbf{angular} components and privatize each in its natural metric. For an embedding $\mathbf{e}\in\mathbb{R}^d$ of token $w$,
    \[
        \mathbf{e} = r\cdot \mathbf{u}, 
        \qquad r=\lVert \mathbf{e}\rVert_2, 
        \qquad \mathbf{u}=\frac{\mathbf{e}}{\lVert \mathbf{e}\rVert_2}\in\mathbb{S}^{d-1}.
    \]
    \begin{figure*}
        \centering
        \includegraphics[scale=0.24]{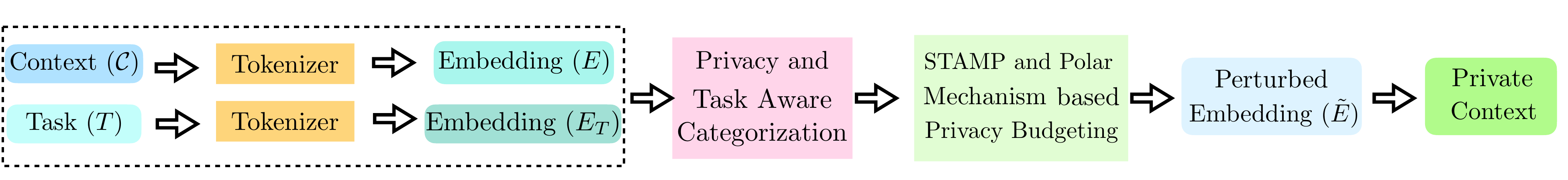}
        \caption{
        \textbf{\textsc{STAMP} privatization pipeline.} \
        Embeddings are decomposed into radial and angular components, perturbed under metric LDP, and decoded by angular proximity into privatized tokens suitable for downstream tasks.
        }
        \label{fig:pipeline}
    \end{figure*}
    \begin{figure}[t]
        \centering
        \includegraphics[scale = 0.3]{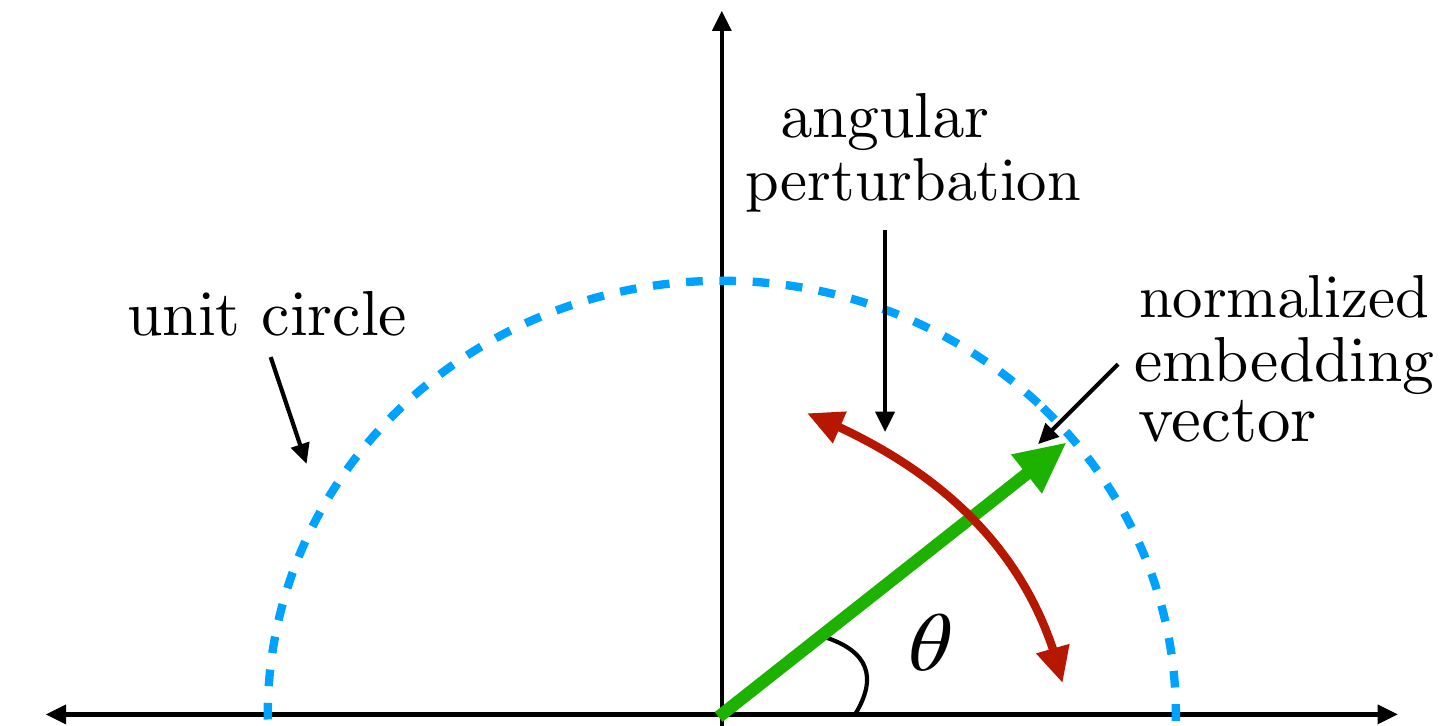}
        \caption{Directional privatization using the Polar mechanism, which perturbs the angle of the normalized embedding vector while normalizing its magnitude.}
        \label{fig:polar}
    \end{figure}
    \noindent \textit{(a) Magnitude/Radial Perturbation via Laplace}. We can privatize the radius with Laplace noise
    \[
        r' \sim \mathrm{Laplace}(r,b_r),\qquad b_r=\Delta_r/\epsilon_r,
    \]
    which gives $(\epsilon_r,0)$-metric LDP with respect to $d_r(r,r')=|r-r'|$.

    \noindent \textit{(b) Direction Perturbation via vMF noise}. 
    Independently, the direction is privatized on the unit sphere with a von Mises–Fisher (vMF) noise
    \[
        \mathbf{u}'\sim\mathrm{vMF}(\mu=\mathbf{u},\,\kappa),
    \]
    where $\kappa\!\ge\!0$ controls concentration. 
    Let $d_c(\mathbf{u},\mathbf{u}')=\lVert \mathbf{u}-\mathbf{u}'\rVert_2$ denote the chordal metric on $\mathbb{S}^{d-1}$. We formalize the privacy guarantee by the vMF noise based mechanism below.        
    \begin{theorem}[vMF mechanism satisfies metric LDP]
        \label{thm:vmf-mldp}
        Let $\mathbb{S}^{d-1}=\{u\in\mathbb{R}^d:\|u\|_2=1\}$ and consider the angular mechanism
        \begin{align*}
            &  \mathcal{M}(u)\sim\mathrm{vMF}(\mu=u,\kappa), \\
            &  \text{with density } 
            f(y\mid u)=C_d(\kappa)\exp\big(\kappa\,u^\top y\big), 
        \end{align*}
        where $C_d(\kappa)$ is the vMF normalizing constant.
        Then $\mathcal{M}$ satisfies \textit{metric local differential privacy} with respect to the metric
        $d(u,u')=\|u-u'\|_2$, in the sense that for all $u,u'\in\mathbb{S}^{d-1}$ and all measurable $S\subseteq \mathbb{S}^{d-1}$,
        \begin{align*}
            \Pr[\mathcal{M}(u)\in S]\le e^{\big(\kappa\,\|u-u'\|_2\big)} \Pr[\mathcal{M}(u')\in S].
        \end{align*}
        Equivalently, the mechanism is $\epsilon$-metric LDP with parameter $\epsilon=\kappa$ under metric $d(u,u')=\|u-u'\|_2$.
    \end{theorem}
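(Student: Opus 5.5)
The plan is to bound the ratio of the two output densities pointwise and then integrate over $S$. Both $\mathcal{M}(u)$ and $\mathcal{M}(u')$ are absolutely continuous with respect to the uniform surface measure $\sigma$ on $\mathbb{S}^{d-1}$. They have the \emph{same} normalizing constant $C_d(\kappa)$, because $C_d(\kappa)$ depends only on $\kappa$ and $d$, not on the mean direction. This follows from rotation invariance of $\sigma$: $\int \exp(\kappa u^\top y)\,d\sigma(y)$ is unchanged under any orthogonal map sending $u$ to $u'$. So for every $y\in\mathbb{S}^{d-1}$,
\[
\frac{f(y\mid u)}{f(y\mid u')}=\exp\big(\kappa\,(u-u')^\top y\big).
\]

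The key step is to bound the exponent uniformly in $y$. By Cauchy--Schwarz and $\|y\|_2=1$,
\[
(u-u')^\top y\le \|u-u'\|_2\,\|y\|_2=\|u-u'\|_2,
\]
so $f(y\mid u)\le e^{\kappa\|u-u'\|_2} f(y\mid u')$ for all $y$.

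Integrating this pointwise inequality over an arbitrary measurable $S\subseteq\mathbb{S}^{d-1}$ with respect to $\sigma$ gives
\[
\Pr[\mathcal{M}(u)\in S]=\int_S f(y\mid u)\,d\sigma(y)\le e^{\kappa\|u-u'\|_2}\int_S f(y\mid u')\,d\sigma(y)=e^{\kappa\|u-u'\|_2}\Pr[\mathcal{M}(u')\in S].
\]
This is the claimed bound with $\delta=0$, i.e.\ $\epsilon$-metric LDP with $\epsilon=\kappa$ under the chordal metric in the sense of Definition~\ref{def:metric-ldp-token}, now phrased on the sphere.

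The only step needing care is the independence of $C_d(\kappa)$ from the mean direction. I would justify it explicitly by the change of variables $y\mapsto Qy$ with $Q$ orthogonal and $Qu'=u$, which preserves $\sigma$. Everything else is a one-line Cauchy--Schwarz estimate. One could note a sharper constant is available (the supremum of $(u-u')^\top y$ equals $\|u-u'\|_2$ exactly, attained at $y=(u-u')/\|u-u'\|_2$), so the bound is tight and cannot be improved in general.
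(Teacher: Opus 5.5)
Your proposal is correct and follows essentially the same argument as the paper. Both cancel the common normalizer $C_d(\kappa)$, bound $\kappa(u-u')^\top y$ by Cauchy--Schwarz using $\|y\|_2=1$, and integrate the pointwise density bound over $S$; your rotation-invariance justification that $C_d(\kappa)$ is independent of the mean direction fills in a step the paper only asserts, while the paper additionally notes the bound transfers to the geodesic metric since $\|u-u'\|_2\le\arccos(u^\top u')$.
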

    \noindent The full proof is deferred to Appendix~\ref{app:proofs-vmf}. Combining the two independent perturbations yields a \emph{two-budget} guarantee $(\epsilon_r,\epsilon_u)$.

    \noindent \textbf{Semantic Decoding}
    The polar mechanism outputs a privatized embedding $\mathbf{e}' = r'\,\mathbf{u}' \in \mathbb{R}^d$ which may not correspond to a valid token. Therefore, a downstream \textit{decoding step} identifies the token whose embedding is most similar in cosine space:  
    \[
        \mathrm{Decode}(\mathbf{e}')=\arg\max_{\mathbf{v}\in E}\frac{{\mathbf{e}'}^\top \mathbf{v}}{\lVert \mathbf{e}'\rVert_2\,\lVert \mathbf{v}\rVert_2},
    \]
    Because the above decoding rule depends only on direction, perturbations of 
$\|\mathbf{e}'\|_2$ cannot affect the argmax---a property we call \emph{radial invariance}. This motivates discarding the radial (magnitude) perturbation entirely and privatizing only the directional component.

\noindent \textbf{Normalized Polar Mechanism.}  
Motivated by this radial invariance, we project each embedding onto the unit sphere—discarding the original radius entirely—and privatize only its direction:
\[
\bar{\mathbf{e}} = \frac{\mathbf{e}}{\|\mathbf{e}\|_2}, 
\qquad 
\mathbf{u}' \sim \mathcal{M}_u(\bar{\mathbf{e}})\ \text{(vMF)}.
\]
This yields $(0, \epsilon_u)$-metric LDP under the angular metric and reduces Polar to a single-budget, direction-only mechanism in practice.  
In particular, the \emph{magnitude is perfectly private}: the released vector has fixed norm ($\|\mathbf{u}'\|_2 = 1$), so the output distribution is independent of $r$, i.e., $0$-metric LDP in the radial component.  
As illustrated in Figure~\ref{fig:polar}, the normalized Polar mechanism perturbs only the angular component while keeping the magnitude constant.  

\medskip
\noindent \textbf{Computational Complexity}. Decoding then reduces to nearest-neighbor search over random directions.  
The computational cost is that of standard nearest-neighbor search on $\mathbb{S}^{d-1}$: $\mathcal{O}(|\mathcal{V}|\,d)$ for exact search, and sublinear in $|\mathcal{V}|$ using modern approximate nearest-neighbor (ANN) methods such as HNSW or FAISS \cite{johnson2019billion, malkov2018efficient}.  
Formally, on the unit sphere, ranking by cosine similarity is equivalent to nearest-neighbor search under both geodesic (angular) and chordal (Euclidean) distances, aligning the decoding geometry with our privacy metric. We formalize this property in the next Proposition.

\begin{table*}[t]
\centering
\scriptsize
\setlength{\tabcolsep}{3.5pt}
\renewcommand\arraystretch{1.15}
% Two-line cell helper (mean on top, std below as ± value)
\newcommand{\twoline}[2]{\shortstack{$#1$\\ \,#2}}

\begin{tabular}{l *{9}{c}}
\toprule
% first column header spans the two header rows as well
\multirow{2}{*}{\shortstack[l]{\textbf{Group Privacy Budgets}\\ \small $\{G_1,G_2,G_3,G_4\}$}}
& \multicolumn{2}{c}{\textbf{SQuAD (Cosine similarity)}} & \multirow{2}{*}{\shortstack{\textbf{Non-Private}\\ \textbf{Baseline}}}
& \multicolumn{2}{c}{\textbf{Yelp (Acc \%)}}             & \multirow{2}{*}{\shortstack{\textbf{Non-Private}\\ \textbf{Baseline}}}
& \multicolumn{2}{c}{\textbf{AG News (Acc \%)}}           & \multirow{2}{*}{\shortstack{\textbf{Non-Private}\\ \textbf{Baseline}}} \\
\cmidrule(lr){2-3}\cmidrule(lr){5-6}\cmidrule(lr){8-9}
& \textbf{Polar (vMF)} & \textbf{Laplace} &
& \textbf{Polar (vMF)} & \textbf{Laplace} &
& \textbf{Polar (vMF)} & \textbf{Laplace} & \\
\midrule
{\{150, 50, 450, 350\}}     & 0.393  & 0.325   & 0.839
                            & 0.360  & 0.220   & 0.580
                            & 0.540  & 0.560   & 0.920 \\
{\{200, 100, 500, 400\}}    & 0.470  & 0.334   & 0.839
                            & 0.380  & 0.140   & 0.580
                            & 0.640  & 0.580   & 0.920 \\
{\{250, 150, 550, 450\}}    & 0.587  & 0.335   & 0.839
                            & 0.480  & 0.180   & 0.580
                            & 0.680  & 0.560   & 0.920 \\
{\{300, 200, 600, 500\}}    & 0.654  & 0.341   & 0.839
                            & 0.560  & 0.200   & 0.580
                            & 0.760  & 0.520   & 0.920  \\
{\{350, 250, 650, 550\}}    & 0.833  & 0.343   & 0.839
                            & 0.560  & 0.140   & 0.580
                            & 0.800  & 0.520   & 0.920 \\
\bottomrule
\end{tabular}

 \caption{Polar vs. Laplace vs. Baseline at matched protocol (cosine decoding for all), embedding dimension at $d{=}768$. 
        Rows are privacy points; columns group mechanisms by dataset. 
        \textit{SQuAD (Cosine)} cells show cosine similarity. 
        \textit{Yelp/AG News} cells show accuracy.}
\label{tab:polar_laplace_baseline_7eps_transposed}
\end{table*}

\begin{proposition}[Equivalence of Semantic Decoding and Nearest-Neighbor Search on the Sphere]
\label{thm:cosine-nn-equivalence}
Let $E = \{v_1, \ldots, v_{|\mathcal{V}|}\} \subset \mathbb{R}^d$ denote the set of token embeddings, and let 
$\hat{e} = e' / \|e'\|_2$ be the direction of a privatized vector $e' \neq 0$.  
Then the following decoding rules are equivalent:
\begin{align}
\mathrm{Decode}(e') 
&= \arg\max_{v \in E} \frac{e'^\top v}{\|e'\|_2 \, \|v\|_2} \\
&= \arg\max_{v \in E} \hat{e}^\top \hat{v} \label{eq:cos} \\
&= \arg\min_{v \in E} \arccos\!\big(\hat{e}^\top \hat{v}\big) \label{eq:geo} \\
&= \arg\min_{v \in E} \|\hat{e} - \hat{v}\|_2, \label{eq:euclid}
\end{align}
where $\hat{v} = v / \|v\|_2$ for $v \neq 0$, and any $v$ with $\|v\|_2 = 0$ can be excluded.  
In particular, decoding by cosine similarity is identical to nearest-neighbor search on the unit sphere under both geodesic (angular) and Euclidean (chordal) distances.
\end{proposition}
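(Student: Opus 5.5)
The plan is to show that the four objectives are related by strictly monotone transformations of one quantity, the cosine $s(v) := \hat{e}^\top \hat{v}$, defined on the restricted set $E^\ast = \{v \in E : \|v\|_2 \neq 0\}$. Equality of argmax/argmin sets is then immediate. I will interpret all four as \emph{sets} of optimizers, so that ties are handled uniformly. The excluded zero vectors have undefined cosine, so removing them changes nothing.

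\textbf{Step 1 (first to second line).} For $v \in E^\ast$ and $e' \neq 0$, bilinearity of the inner product gives
\[
\frac{e'^\top v}{\|e'\|_2\,\|v\|_2} = \Big(\frac{e'}{\|e'\|_2}\Big)^\top \Big(\frac{v}{\|v\|_2}\Big) = \hat{e}^\top \hat{v}.
\]
The two objectives are identical functions of $v$, so their argmax sets coincide.

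\textbf{Step 2 (geodesic form).} Both $\hat{e}$ and $\hat{v}$ are unit vectors, so Cauchy--Schwarz gives $s(v) \in [-1,1]$, and $\arccos$ is defined there. Since $\arccos:[-1,1]\to[0,\pi]$ is strictly decreasing, $s(v) > s(v')$ holds if and only if $\arccos s(v) < \arccos s(v')$. Hence maximizers of $s$ are exactly minimizers of $\arccos\circ s$, which is \eqref{eq:geo}.

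\textbf{Step 3 (chordal form).} Expanding the square on the unit sphere,
\[
\|\hat{e} - \hat{v}\|_2^2 = \|\hat{e}\|_2^2 + \|\hat{v}\|_2^2 - 2\hat{e}^\top\hat{v} = 2 - 2 s(v).
\]
The map $t \mapsto \sqrt{2-2t}$ is strictly decreasing on $[-1,1]$, so minimizing $\|\hat{e}-\hat{v}\|_2$ is equivalent to maximizing $s(v)$, giving \eqref{eq:euclid}. The closing remark of the proposition is then just a restatement: on $\mathbb{S}^{d-1}$, the geodesic distance is $\arccos(\hat{e}^\top\hat{v})$ and the chordal distance is $\|\hat{e}-\hat{v}\|_2$.

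\textbf{Main obstacle.} There is no real analytic difficulty here. The only delicate point is being precise about what ``equivalent'' means. I will state explicitly that equality holds for argmax/argmin sets. I will also note that the maximum is attained because $E^\ast$ is finite, and that using strict monotonicity rather than mere monotonicity is what guarantees the optimizer sets coincide exactly, including in the presence of ties.
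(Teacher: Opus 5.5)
Your proof is correct, but it is not the argument the paper gives. The paper's appendix proof is statistical. For fixed $\kappa$ it writes $\log f(y\mid\mu(w)) = \kappa\,\mu(w)^\top y + \mathrm{const}$ and concludes that maximum-likelihood decoding of a vMF output $y$ is $\arg\max_w \mu(w)^\top y$. It then simply asserts that this ``is exactly cosine nearest-neighbor on $\mathbb{S}^{d-1}$.'' The geodesic and chordal forms are never derived in that proof. The only related ingredient is the identity $d_2 = 2\sin(d_g/2)$ recorded in the notation paragraph, which the proof does not use. So the paper really establishes a complementary fact: cosine decoding is the MLE decoder under the vMF channel, which is what justifies calling the perturbation and decoding geometries ``aligned.'' That step also tacitly needs $\kappa>0$, since at $\kappa=0$ every token is a likelihood maximizer.

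Your route is deterministic and does not depend on the noise model. You use bilinearity for the first equality, strict monotonicity of $\arccos$ on $[-1,1]$ for the geodesic form, and $\|\hat e-\hat v\|_2^2 = 2-2\,\hat e^\top\hat v$ for the chordal form. You read argmax/argmin as sets, and strictness gives both inclusions. This proves the chain of equalities exactly as stated, for any $e'\neq 0$, independent of the noise distribution or $\kappa$. The paper's argument buys the likelihood interpretation but not the equivalences themselves. The only thing to add is that $E^\ast$ must be nonempty for the optimizers to exist.
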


\noindent The proof is provided in Appendix~\ref{app:proofs-ml}.  
Angular decoding preserves semantic neighborhoods while discarding norm information, which often correlates with token frequency or salience.  
Together with Polar perturbation, this alignment yields privatized text consistent with cosine semantics without introducing additional computational overhead.

    \begin{algorithm}[h]
        \caption{STAMP--Polar Mechanism}
        \label{alg:polar}
        \begin{algorithmic}[1]
            \REQUIRE Context $c=(w_1,\ldots,w_n)$; task $T$; grouping map $g_T:\mathcal{V}\!\to\!\{1,2,3,4\}$; group privacy budgets $\{\epsilon_u^{(c)}\}_{c=1}^4$; vocabulary embeddings $V=\{\mathbf{v}\}$
            \ENSURE Privatized sequence $\tilde c=(w'_1,\ldots,w'_n)$
            \FOR{$i=1$ \TO $n$}
                \STATE $c_i \gets g_T(w_i)$
                \STATE $\mathbf{e}_i \gets \mathbf{e}(w_i)$;\quad $\bar{\mathbf{e}}_i \gets \mathbf{e}_i / \|\mathbf{e}_i\|_2$
                \STATE $\kappa_i \gets \epsilon_u^{(c_i)}$
                \STATE Sample $\mathbf{u}'_i \sim \mathrm{vMF}\!\left(\mu=\bar{\mathbf{e}}_i,\,\kappa_i\right)$
                \STATE $w'_i \gets \arg\max_{\mathbf{v}\in V} \dfrac{{\mathbf{u}'_i}^\top \mathbf{v}}{\|\mathbf{v}\|_2}$ \textit{(decoding)}
            \ENDFOR
            \STATE \textbf{return} $\tilde c=(w'_1,\ldots,w'_n)$
        \end{algorithmic}
    \end{algorithm}

    \vspace{5pt}
    \noindent Figure~\ref{fig:pipeline} illustrates the overall \textsc{STAMP} workflow: tokens are grouped using the public task map $g_T$, group-level privacy budgets are assigned, embeddings are privatized, and the resulting representations are decoded back to tokens.
    The privatization step is instantiated with the \emph{normalized Polar} mechanism.
    Algorithm~\ref{alg:polar} summarizes the context-level procedure: for each token position $i$, determine its group label $c_i = g_T(w_i)$, normalize its embedding, sample a vMF-perturbed direction with concentration parameter $\kappa_i = \epsilon_u^{(c_i)}$, and decode by cosine similarity to obtain the privatized token $w'_i$.
    
    \noindent \textbf{Span-level Constraints.}
    While STAMP perturbs tokens individually, the budgeting mechanism operates at the span level. 
    If a detector identifies a multi-token entity (e.g., 'New York'), the entire span is assigned to the same privacy group. 
    While boundary detection errors are inevitable, STAMP is robust to mild inconsistencies because the Polar mechanism perturbs only direction, maintaining local semantic neighborhoods even if adjacent tokens receive slightly different budgets.

    % Figure~\ref{fig:pipeline} presents the general \textsc{STAMP} pipeline: tokens are grouped by the public task map $g_T$, group-level privacy budgets are assigned, embeddings are privatized, and outputs are decoded to tokens. 
    % Moreover, we instantiate the privatization step with the \emph{normalized Polar} mechanism. Algorithm~\ref{alg:polar} is the context-level master procedure: for each position $i$, assign $c_i=g_T(w_i)$, normalize the embedding, sample a vMF-perturbed direction with $\kappa_i=\epsilon_u^{(c_i)}$, and decode by cosine similarity to produce $w'_i$.

\section{Experiments}
\label{sec:experiments} In this Section, we present the experimental results on three commonly used datasets: SQuAD \cite{rajpurkar2016squad}, Yelp \cite{yelp_open_dataset}, and AG News \cite{zhang2015charcnn}, and analyzed the privacy utility trade-off for Polar (vMF), Laplace Mechanism and non-private baseline Mask and Fill. Specifically, we tried answering two questions:
(i) at matched privacy budget, how do the Polar (angular vMF) and isotropic Laplace mechanisms compare in utility; and
(ii) at matched privacy budget, how does \textsc{STAMP} framework compare to uniform framework.
    %We evaluate on three commonly used datasets: SQuAD, Yelp \cite{yelp_open_dataset}, and AG News \cite{zhang2015charcnn}.

    \begin{figure}[t]
        \centering
        \includegraphics[scale = 0.42]{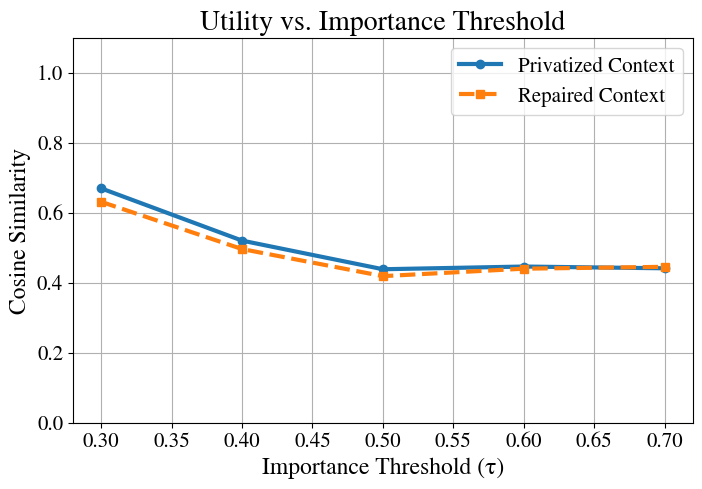}
        \caption{
        The parameter $\tau$ serves as the critical decision boundary that classifies tokens as task-relevant or irrelevant based on their cosine similarity to the task representation. 
        A sweep of $\tau$ reveals that downstream utility (measured by cosine similarity between original and privatized contexts) stabilizes around $\tau=0.5$, indicating a robust operating point that preserves task-critical signals without over-perturbing the context.}
        \label{fig:tau}
    \end{figure}

    % Main results, 2 row of 3 figures side by side
    \begin{figure*}[t]
    \includegraphics[scale = 0.24]{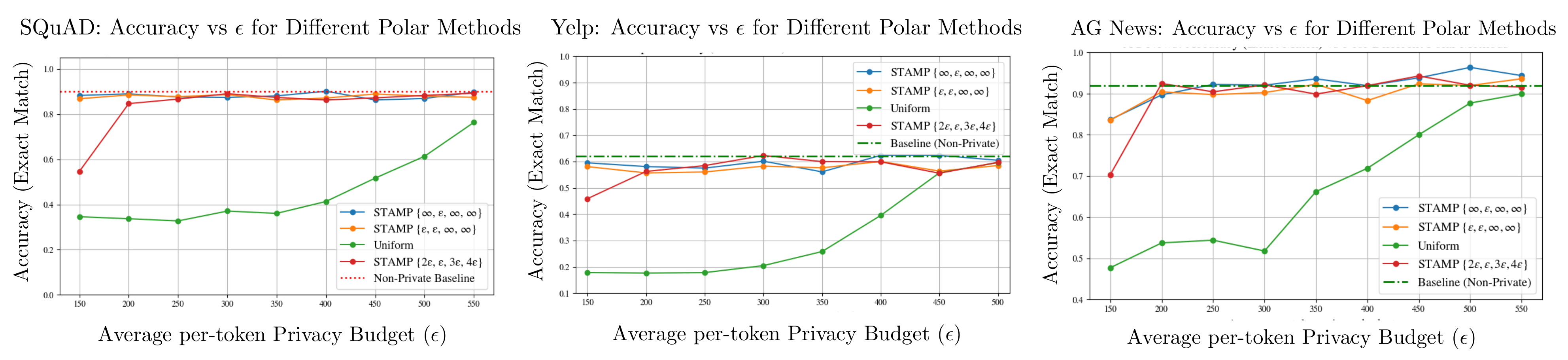}
        % -------- Bottom row: \textsc{STAMP}  vs UNIFORM (mid-range \bar{\epsilon}) --------
        %\begin{subfigure}{0.32\textwidth}
          %\includegraphics[width=\linewidth]{figures/STAMP_VS_Uniform_Fantasy.png}
          %\caption{SQuAD}
       % \end{subfigure}
        %\begin{subfigure}{0.32\textwidth}
          %\includegraphics[width=\linewidth]{figures/STAMP_VS_Uniform_Yelp.png}
          %\caption{Yelp}
        %\end{subfigure}
        %\begin{subfigure}{0.32\textwidth}
          %\includegraphics[width=\linewidth]{figures/STAMP_VS_Uniform_AGnews.png}
         % \caption{AG News}
        %\end{subfigure}
        
        \caption{\textsc{STAMP} vs. Uniform (single angular budget) under the Polar mechanism across three tasks: SQuAD (QA), Yelp (sentiment), and AG News (topic classification). 
        Curves show task performance versus the \emph{base} angular budget $\epsilon_0$; both frameworks use normalized polar mechanism and cosine decoding. \textsc{STAMP} variants (e.g., private-only, private-low, heuristic) retain higher utility by concentrating noise on sensitive, low-importance spans, while Uniform converges as noise vanishes at large $\epsilon_0$.}
        \label{fig:comp-grid}
        \end{figure*}
                
        \noindent \textbf{Evaluators, rewrite, and metrics.}
        For QA we use GPT-4 prompted to answer questions from the privatized context; for classification we use existing models \cite{kmack_yelp_review_classifier} \cite{textattack_distilbert_ag_news} trained on the original (non-privatized) training splits of Yelp and AG News, with privatization applied only at inference on test. Unless stated otherwise, results are reported without rewriting. For SQuAD, we additionally include an optional post-privatization rewrite variant that performs coherence repair without introducing external facts.
        QA metrics are EM (exact match) accuracy and cosine similarity between predicted and gold answers (Sentence-BERT); classification tasks use accuracy.
        We report dataset means averaged over three runs.
        Additional experiment details are provided in Appendix \ref{app:exp}.  
        Our code is available at \cite{stamp_code}.

    \subsection{Experimental Setup} We evaluate our proposed approach on three datasets for different tasks—SQuAD (question answering), Yelp (sentiment rating), and AG News (topic classification)—using task-specific metrics: answer cosine similarity for QA and accuracy for classification. All comparisons are conducted at matched per-token ${\epsilon}$ values.
    % Tokens are partitioned into a $2{\times}2$ grid by importance $\times$ privacy: importance is $\cos(\text{token},\text{query}) \ge \tau$ with a fixed threshold $\tau$, and privacy follows standard NER/PII rules. 

    Tokens are partitioned into a $2{\times}2$ grid by importance $\times$ privacy as shown in Figure \ref{tab:stamp_grid} (b). 
    Task importance is determined by the cosine similarity between a token's embedding and the task/query representation, defined as $\cos(\text{token},\text{query}) \ge \tau$. 
    To select this threshold, we performed a parameter sweep for $\tau \in \{0.3, 0.4, 0.5, 0.6, 0.7\}$, measuring the cosine similarity between the original and privatized contexts.
    Figure \ref{fig:tau} shows that utility stabilizes around $\tau=0.5$; we therefore fixed $\tau=0.5$ as the representative default for all main experiments.
    Privacy sensitivity follows standard NER/PII rules, treating tokens tagged as Person, Location, Organization, or Numeric Identifiers as sensitive. 
    Comparisons are conducted at 'matched privacy budget,' meaning both mechanisms satisfy $\epsilon$-metric-LDP for the same value of $\epsilon$. 
    Any utility difference is therefore attributable to the mechanism's geometry (directional vs. isotropic) or the allocation strategy, rather than an increased privacy budget.
    %We report realized $(\#G_1,\ldots,\#G_4)$ and the resulting $\bar{\epsilon}$.}

    % Mechanism-wise, we compare Polar—the normalized, direction-only vMF channel on the unit sphere—to isotropic Laplace noise in $\mathbb{R}^d$, holding decoding fixed: both methods decode by cosine nearest neighbor (implemented as Euclidean NN over a unit-normalized embedding table). Unless noted, embeddings are GPT-2 with $d{=}768$; under our metric-LDP formulation, maintaining comparable angular indistinguishability suggests scaling $\epsilon$ roughly linearly in $d$, but all runs here use the same $d$. For SQuAD, a capable LLM answers questions from privatized contexts; for Yelp and AG News, off-the-shelf classifiers are trained on original (non-privatized) 

    \subsection{Polar vs Laplace}
    We compare \emph{Polar mechanism} (normalized, direction-only vMF) to \emph{isotropic Laplace mechanism} under identical privacy budgets and the same embedding dimension ($d{=}768$). 
    For this comparison only, we sweep a \emph{per-token} privacy budget specific to the mechanism (directional $\epsilon_{\text{dir}}$ for Polar, isotropic $\epsilon_{\text{iso}}$ for Laplace) and report the mean over the sweep. Table~\ref{tab:polar_laplace_baseline_7eps_transposed} summarizes results (SQuAD: answer cosine similarity; Yelp/AG News: accuracy), averaged over three seeds.
    Across datasets, Polar consistently yields a stronger utility–privacy trade-off: at relatively low budgets, Laplace collapses toward chance-level performance while Polar remains effective; as the budget increases, Polar mechanism's performance improves smoothly and approaches the non-private baseline, whereas Laplace continues to lag across the sweep.

    \subsection{STAMP vs Uniform}
    At matched per-token privacy budget $\epsilon$ (Fig.~\ref{fig:comp-grid}), \textsc{STAMP} consistently outperforms a Uniform scheme in utility, especially in the low–to–mid privacy regime. 
    \textsc{STAMP} preserves high-importance content while concentrating noise on spans that are privacy-sensitive yet task-unimportant. 
    Utility is highest when perturbations are confined to such a small subset; as the protected set expands (from one group to two, three, and then all four groups), performance degrades monotonically. 
    Uniform—allocating the same budget to every token—forms the lower envelope across the grid.
    
    \noindent Although \textsc{STAMP} perturbs only a subset of groups, it effectively protects privacy-sensitive content while leaving high-importance tokens largely intact, thereby avoiding the typical utility penalty. 
    Moreover, question-conditioned importance (e.g., SQuAD) amplifies \textsc{STAMP}'s gains, whereas fixed prompts (Yelp/AG News) dampen them by stabilizing group proportions.
    Beyond utility, \textsc{STAMP} provides a controllable knob over which content to protect, aligning with the paper’s claim that privacy preferences are inherently subjective and task-dependent.

    \subsection{Computational Overhead}
    We analyze the overhead of STAMP relative to standard isotropic mechanisms (e.g., Laplace).
    \paragraph{Grouping and Budgeting.} 
    The task-aware grouping step involves NER/PII detection and computing token-task similarities. 
    Its complexity is linear in the number of tokens ($O(n)$). 
    In practice, this cost is minimal because it relies on the same embedding encoder forward pass required by the downstream model itself. 
    In our experiments on SQuAD, this partitioning step added only \textbf{0.002s} per example (avg. 180 tokens).
    \paragraph{Mechanism Comparison.}
    The Polar mechanism requires sampling from the von Mises-Fisher (vMF) distribution. 
    While geometrically distinct from isotropic noise, efficient rejection sampling schemes allow vMF sampling to scale linearly with embedding dimension ($O(d)$), comparable to the efficiency of Gaussian or Laplace sampling.
    \paragraph{Empirical Latency.}
    To quantify the actual overhead, we measured the total wall-clock time on SQuAD validation examples (avg. 180 tokens). 
    The average per-example runtime was \textbf{35.16s} (\textbf{195 ms/token}) for STAMP-Polar compared to \textbf{34.54s} (\textbf{192 ms/token}) for the Laplace baseline. 
    This comparison confirms that STAMP with Polar mechanism operate with essentially the same computational latency as uniform isotropic baselines.

\section{Conclusion}

    We introduced \textsc{STAMP} , a task-aware mechanism for privatizing text under local differential privacy.
    By combining geometry-aware perturbation with task-dependent budget allocation, \textsc{STAMP}  aims to balance privacy and utility across diverse NLP tasks.
    We provide formal guarantees and report empirical results on SQuAD, Yelp, and AG News. 
    STAMP outperforms Uniform at matched per-token budgets and Polar surpasses isotropic Laplace.
    Future work will address dynamic tasks and sequence-level dependencies, moving toward more robust privacy-preserving NLP systems.

%% This should be sharp end of the page 8
\newpage
\section{Limitations}

    \textsc{STAMP} represents an initial step toward task-aware LDP for text, but it carries several limitations.  
    Most notably, the framework assumes the availability of a meaningful task description that can be encoded as a fixed representation at inference time; this assumption may not hold in interactive, open-ended, or multi-turn scenarios. 
    Additionally, token-level relevance grouping is based on static embedding similarity, which may miss nuances of functional importance for tasks involving complex syntactic structures, discourse-level reasoning, or external knowledge. 
    Moreover, locally privatizing high-dimensional embeddings ($d=768$) requires relatively large per-token $\epsilon$ values to retain utility, which is a known challenge in text LDP. 
    Our contribution is relative: under a fixed privacy budget, \textsc{STAMP}  reallocates protection to where it is most needed.
    Also, this framework relies on the quality of the task/sensitivity oracle; while robust to mild errors, imperfect PII boundary detection remains a limitation inherent to entity-based approaches.
    Finally, \textsc{STAMP}  does not currently account for long-range dependencies or structured interactions between tokens during budget allocation, which may result in over- or under-perturbation in semantically rich contexts. 
    We leave addressing these challenges to future work.

\section{Ethical Considerations}

    \textsc{STAMP} is designed to enhance privacy in NLP by providing formal LDP guarantees, enabling the use of sensitive corpora such as email or financial text without exposing raw content. 
    This represents a positive step toward building systems where privacy is a first-class objective alongside accuracy. 
    
    \noindent At the same time, several risks remain. 
    Task-aware budgets may protect some attributes more strongly than others, raising the possibility of uneven coverage across demographic or domain-specific categories. 
    Careless or adversarial configuration of privacy budgets could also weaken effective guarantees, giving a false sense of protection. 
    Finally, \textsc{STAMP}  focuses on token-level privatization and does not address broader concerns such as fairness, data misuse, or downstream harms that can arise even from privatized text.  
    
    \noindent We emphasize that \textsc{STAMP}  is a methodological tool, not a complete solution, and that responsible deployment requires auditing, fairness checks, and clear communication of privacy parameters.

\newpage 
\section*{Acknowledgement}

    This work was supported by NSF grants CCF 2100013, CNS 2209951, CNS 2317192, and U.S. Department of Energy, Office of Science, Office of Advanced Scientific Computing under Award Number DE-SC-ERKJ422, and by NIH through Award 1R01CA261457-01A1.
    
    \noindent Notice: This manuscript has been authored by UT-Battelle, LLC, under contract DE-AC05-00OR22725 with the US Department of Energy (DOE). 
    The US government retains and the publisher, by accepting the article for publication, acknowledges that the US government retains a nonexclusive, paid-up, irrevocable, worldwide license to publish or reproduce the published form of this manuscript, or allow others to do so, for US government purposes. 
    DOE will provide public access to these results of federally sponsored research in accordance with the DOE Public Access Plan (\url{https://www.energy.gov/doe-public-access-plan}).

\bibliography{STAMP_CR}
%%%%%%%%%%%%%%%%%%%%%%%%%%%%%%%%%%%%%%%%%%%%%%%%%%%%%%%%%%%%
\newpage
\appendix

% ----- Appendix index (unnumbered) -----
\section*{Appendix Index}
This appendix provides supplementary material referenced in the main text:
\begin{itemize}
  \item Threat Model (\S\ref{app:threat})
  \item Privacy Guarantees: Detailed Proofs (\S\ref{app:proofs})
  \begin{itemize}
     \item Proof of Theorem~1
     \item Proof of Theorem~2 (Sequence-level composition)
     \item Proof of Theorem~3 (vMF is $\kappa$-mLDP)
     \item Proof of Proposition~1 (Cosine decoding is MLE under vMF)
  \end{itemize}
  \item Additional Experimental Results (\S\ref{app:exp})
\end{itemize}

% ===== First appendix section =====
\section{Threat Model}
\label{app:threat}

    \paragraph{Setting.}
    Each user holds a private text input $C$. Before transmission or use for task $T$, the user applies the local mechanism $M$ to obtain a perturbed output $\tilde{C}=M(C,T)$ and only $\tilde{C}$ is shared.
    
    \paragraph{Aggregator model (honest-but-curious).}
    The central server follows the protocol (collects $\tilde{C}$) but may attempt to infer information about the original inputs $C$ from $\tilde{C}$.
    
    \paragraph{Adversary’s goals.}
    Infer sensitive content or attributes of $C$, including:
    \begin{enumerate}
      \item identifying specific sensitive tokens/spans (e.g., names, locations, medical terms);
      \item inferring user attributes correlated with $C$ (e.g., demographics or preferences);
      \item partial or full reconstruction of $C$.
    \end{enumerate}
    
    \paragraph{Adversary’s knowledge.}
    We assume knowledge of: 
        (i) the mechanism $M$ and its parameters,
        (ii) the vocabulary/embeddings, 
        (iii) the public grouping map $g_T$ for task $T$, 
        and (iv) any auxiliary background knowledge. 
    The adversary observes only privatized outputs (and decoded tokens, if decoding is used), not raw inputs.
    
    \paragraph{User/device assumptions.}
    Randomization occurs locally; per-position channels are sampled independently conditioned on inputs and groups; the RNG is not adversarially controlled.
    
    \paragraph{Public disclosure.}
    The grouping map $g_T$ and mechanism hyper-parameters (e.g., $\kappa$ or $\epsilon$ per group) are treated as public; privacy does not rely on secrecy of $g_T$.
    
    \paragraph{Out-of-scope threats.}
    We do not address side channels (timing, memory), compromised clients, corrupted RNG, or post-aggregation leakage unrelated to the local mechanism.
    
    \paragraph{Guarantees in scope.}
    Protection follows the metric LDP guarantees for the angular channel (Theorem~\ref{thm:vmf-mldp}) and their sequence-level composition (Theorem~\ref{thm:composition}). 
    Experiments report the \emph{per-token} budget $\bar{\epsilon}$ and realized group counts $(\#G_1,\#G_2,\#G_3,\#G_4)$; $\bar{\epsilon}$ is computed from the observed mix for comparability across runs.

\section{Privacy Guarantees: Detailed Proofs}
\label{app:proofs}

    % Minimal notation so this section is self-contained
    \paragraph{Notation.}
    We work on the unit sphere $\mathbb{S}^{d-1}=\{y\in\mathbb{R}^d:\lVert y\rVert_2=1\}$.
    Geodesic (angular) distance is $d_g(u,v)=\arccos(u^\top v)$ and chordal distance is $d_2(u,v)=\lVert u-v\rVert_2$.
    They satisfy $d_2(u,v)=2\sin\!\big(\tfrac{1}{2}d_g(u,v)\big)\le d_g(u,v)$.
    The vMF density with mean $\mu\in\mathbb{S}^{d-1}$ and concentration $\kappa\ge 0$ is
    $f(y\mid \mu)=C_d(\kappa)\exp\!\big(\kappa\,\mu^\top y\big)$, where $C_d(\kappa)$ is independent of $\mu$.
    
    \subsection{Proof of Theorem~\ref{thm:task-mldp} (MLDP for \textsc{STAMP} )}
    \label{app:proofs-task}
        \begin{proof}
            Let $g_T$ be the public grouping map for task $T$ assigning token $w$ to group $c=g_T(w)$.
            Condition on any fixed group $c$.
            The angular channel is $Y\sim \mathrm{vMF}(\mu(w),\kappa_c)$; by Theorem~\ref{thm:vmf-mldp}, it satisfies $(\epsilon,0)$-metric LDP with $\epsilon=\kappa_c$ on $\mathbb{S}^{d-1}$.
            As $g_T$ is a deterministic (public) function of the input, restricting to the subset with $g_T(w)=c$ preserves the worst-case guarantee.
            Therefore the mechanism is $(\epsilon_T^{(c)},0)$-metric LDP with $\epsilon_T^{(c)}=\kappa_c$ for each group $c$.
        \end{proof}
    
    \subsection{Proof of Theorem~\ref{thm:composition} (Sequence-level composition)}
    \label{app:proofs-comp}
        \begin{proof}
            Let $x=(x_1,\dots,x_n)$ and independent per-position channels $M_t$ that are $\epsilon_t$-mLDP w.r.t. a single-token metric $d$.
            For any measurable $S\subseteq \mathcal{Y}^n$,
            \begin{align*}
                \frac{\Pr[M(x)\in S]}{\Pr[M(x')\in S]}
                & \le\ \prod_{t=1}^{n}\exp\!\big(\epsilon_t\, d(x_t,x_t')\big) \\
                & =\ \exp\!\Big(\sum_{t=1}^{n}\epsilon_t\, d(x_t,x_t')\Big).       
            \end{align*}
        
            Thus the product channel is $(\sum_t \epsilon_t)$-mLDP w.r.t. the sequence metric $d_\Sigma(x,x')=\sum_t d(x_t,x_t')$.
            For group-specific budgets, set $\epsilon_t=\epsilon(c_t)$ with $c_t=g_T(x_t)$ to obtain the stated form.
        \end{proof}

    \subsection{Proof of Theorem~\ref{thm:vmf-mldp} (vMF is $\kappa$-mLDP)}
    \label{app:proofs-vmf}
        \begin{proof}
            Fix unit $\mu,\nu\in\mathbb{S}^{d-1}$ and measurable $S\subseteq\mathbb{S}^{d-1}$.
            Using the vMF density and cancellation of $C_d(\kappa)$,
            \[
            \begin{aligned}
                \log\frac{f(y\mid \mu)}{f(y\mid \nu)}
                &= \kappa\,(\mu-\nu)^\top y \\
                &\le \kappa\,\|\mu-\nu\|_2\,\|y\|_2 \\
                &= \kappa\,\|\mu-\nu\|_2
            \end{aligned}
            \]
            By Cauchy–Schwarz and $\lVert y\rVert_2=1$, exponentiating and taking supremum over $S$ yields
            \[
                \Pr[M(\mu)\in S]\ \le\ \exp\!\big(\kappa\lVert \mu-\nu\rVert_2\big)\ \Pr[M(\nu)\in S],
            \]
            i.e., $(\epsilon,0)$-metric LDP with $\epsilon=\kappa$ under $d_2$.
            Since $d_2(\mu,\nu)\le d_g(\mu,\nu)$ the same bound holds under $d_g$.

            \noindent Let $\theta=d_g(u,v)\in[0,\pi]$. 
            Then $d_2(u,v)=2\sin(\theta/2)\le \theta=d_g(u,v)$ because $\sin x\le x$ for $x\ge 0$.
        \end{proof}
        
    \subsection{Proof of Proposition~\ref{thm:cosine-nn-equivalence} (Cosine decoding)}
    \label{app:proofs-ml}
        \begin{proof}
            For fixed $\kappa$, $\log f(y\mid \mu(w))=\kappa\,\mu(w)^\top y+\mathrm{const}$, hence
            \[
                \arg\max_{w}\log f(y\mid \mu(w))=\arg\max_{w}\mu(w)^\top y,
            \]
            which is exactly cosine nearest-neighbor on $\mathbb{S}^{d-1}$.
        \end{proof}

\section{Additional Experimental Results}
\label{app:exp}

    \paragraph{Prompts for Rewriting (Coherent Repair).}
    For the optional Coherence Repair step, we utilized \texttt{gpt-4o-mini} with a temperature of 0.2. 
    The model was invoked with the following configuration:
    
    \begin{quote}
        \textbf{System Prompt:} ``You are a careful editor. 
        Rewrite the passage into coherent, grammatical English,
        keeping the original meaning and tone. 
        Do NOT add external facts, do NOT invent names, dates, locations, or entities, and do NOT expand abbreviations. 
        Keep unusual or unknown tokens as-is if wrapped in ....
        If ... appears, keep its contents exactly unchanged and keep it in place. 
        Preserve paragraphing; only fix grammar/fluency and minimal function words."\\
        \textbf{User Prompt:} ``Rewrite the passage below. 
        Return ONLY the rewritten passage (no commentary)."
    \end{quote}
    
    \paragraph{Prompts for Answer Generation (Evaluator).}
    To evaluate downstream utility on the SQuAD task, we used \texttt{gpt-4o-mini} as the question-answering model. 
    We used a low temperature ($T=0.2$) to ensure deterministic outputs.
    
    \begin{quote}
    \textbf{System Prompt:} ``You are a helpful assistant that answers questions based on the provided context. 
    Limit your answer to one word.''
    \\
    \textbf{User Prompt:} ``Context: \textit{[Privatized/Repaired Context]}
    \\
    Question: \textit{[Question]}''
    \end{quote}
    
    \paragraph{Fantasy SQuAD Dataset.} 
    A key challenge in evaluating privacy mechanisms with LLMs is data contamination: models like \texttt{gpt-4o-mini} have likely seen the original SQuAD dataset during pre-training and can answer questions from memory even if the context is redacted. 
    To eliminate this confounding factor, we generated a synthetic "Fantasy SQuAD" dataset consisting of fictional passages set in a unique fantasy universe. 
    Because these facts do not exist in the model's parametric memory, the model is forced to rely solely on the privatized context to answer questions, providing a rigorous lower-bound estimate of the mechanism's true utility preservation.
    To ensure the QA task evaluated context usage rather than parametric memory, we synthesized a dataset based on high-fantasy lore. 
    We prompted GPT-4 to create coherent but entirely fictional encyclopedia entries and narrative snippets.

    \begin{quote}
    \textbf{Generation Prompt:} ``Generate a paragraph describing a fictional historical event, city, or biological species that does not exist in the real world. 
    Use unique, invented proper nouns. 
    After the paragraph, provide 5 questions that can be answered \textit{only} by reading the text, along with their extractive answers.''
    \end{quote}

    \noindent This process yielded a dataset where every proper noun and fact is hallucinated by design, ensuring that any correct answer retrieved by the evaluator model must originate from the (privatized) input context.
    
    \paragraph{Additional Examples}

    \begin{figure*}[t]
        \centering
        \includegraphics[scale = 0.15]{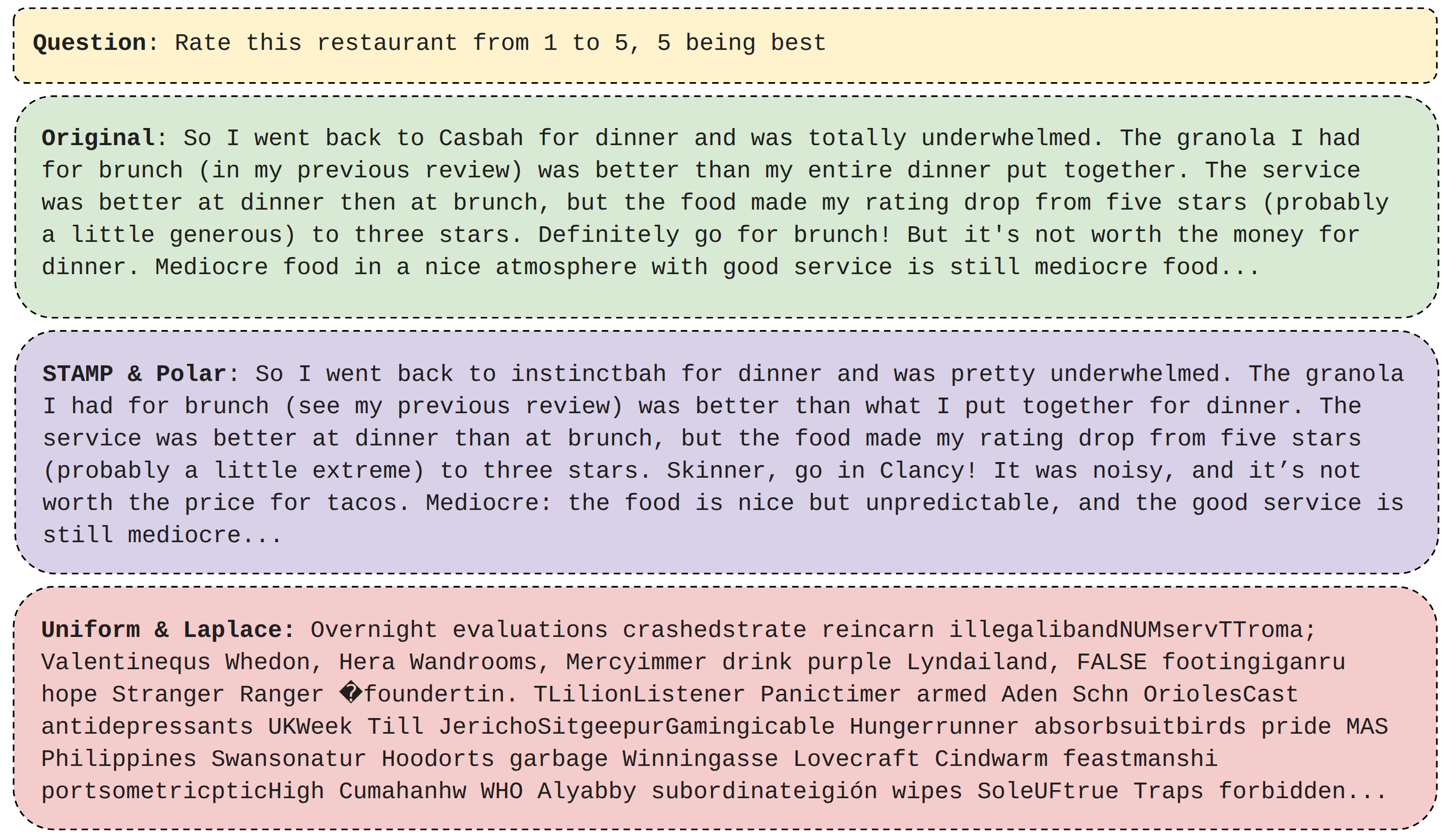}
        \caption{Qualitative comparison on a Yelp review (Sentiment Analysis) at a matched average privacy budget of $\bar{\epsilon} \approx 540$. 
        The privatized outputs shown (middle/bottom) are subsequently coherence-repaired via a constrained rewrite that preserves meaning and forbids adding external facts or inventing names/entities. 
        Even after repair, the \textbf{Uniform \& Laplace} baseline (bottom) collapses semantic structure and remains largely unintelligible. 
        In contrast, \textbf{STAMP \& Polar} (middle) preserves syntactic form and salient sentiment cues (e.g., ``underwhelmed” and the explicit rating ``three stars”), enabling correct sentiment inference despite local perturbations.}
    \end{figure*}

    \begin{figure*}[t]
        \centering
        \includegraphics[scale = 0.15]{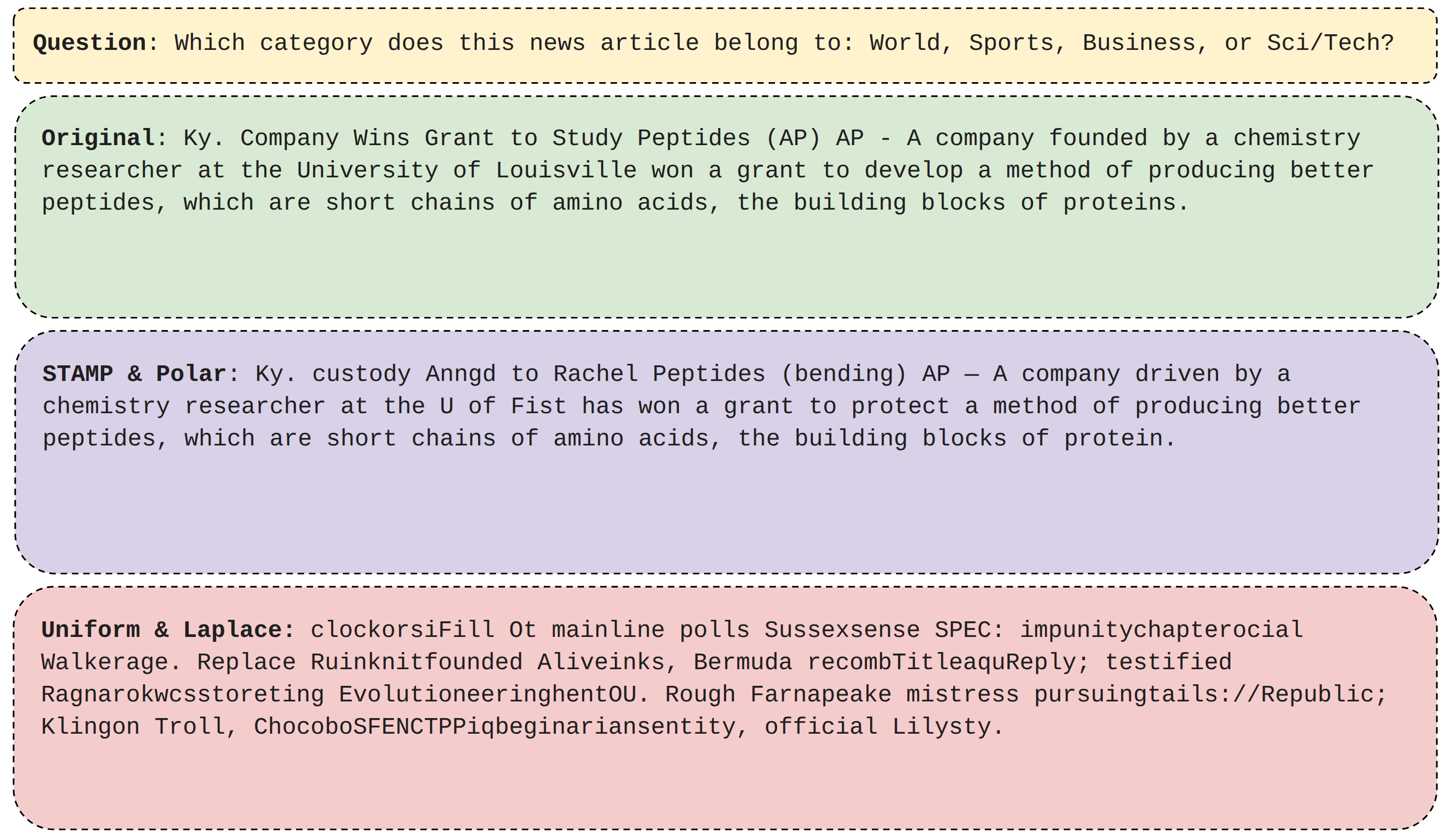}
        \caption{Qualitative comparison on an AG News article (Topic: Sci/Tech) at a matched average privacy budget of $\bar{\epsilon} \approx 490$. 
        The privatized outputs shown (middle/bottom) are subsequently coherence-repaired under the same constrained rewrite setting.
        \textbf{Uniform \& Laplace} (bottom) yields near-complete information loss, producing a sequence of weakly related or unrelated tokens even after repair.
        \textbf{STAMP \& Polar} (middle) successfully retains domain-specific terminology critical for topic classification (e.g., ``Peptides”, ``chemistry researcher”, ``protein”), demonstrating stronger utility preservation under privacy.}
    \end{figure*}

    \begin{figure*}[t]
        \centering
        \includegraphics[scale = 0.15]{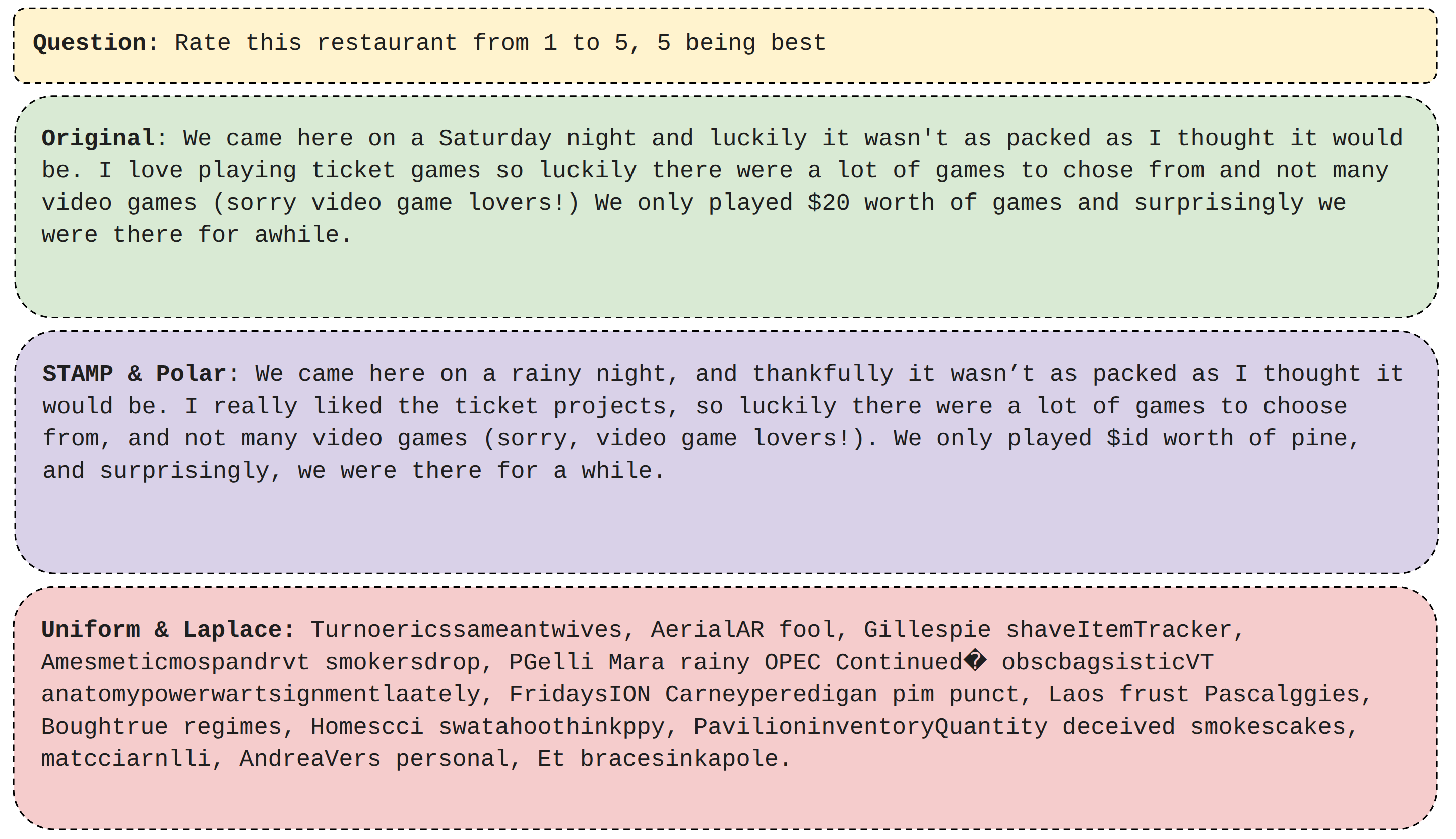}
        \caption{Qualitative comparison on a Yelp review (Sentiment Analysis) at a matched average privacy budget of $\bar{\epsilon} \approx 540$. 
        The privatized outputs shown (middle/bottom) are subsequently coherence-repaired using the same constrained rewrite rule set.
        The \textbf{Uniform \& Laplace} baseline (bottom) results in total semantic collapse, outputting a nonsensical sequence of tokens. 
        Conversely, \textbf{STAMP \& Polar} (middle) maintains narrative structure and key contextual anchors (e.g., ``video games”, ``ticket projects”, ``packed”), enabling the downstream classifier to correctly interpret the review’s positive sentiment.}
    \end{figure*}

    \begin{figure*}[t]
        \centering
        \includegraphics[scale = 0.15]{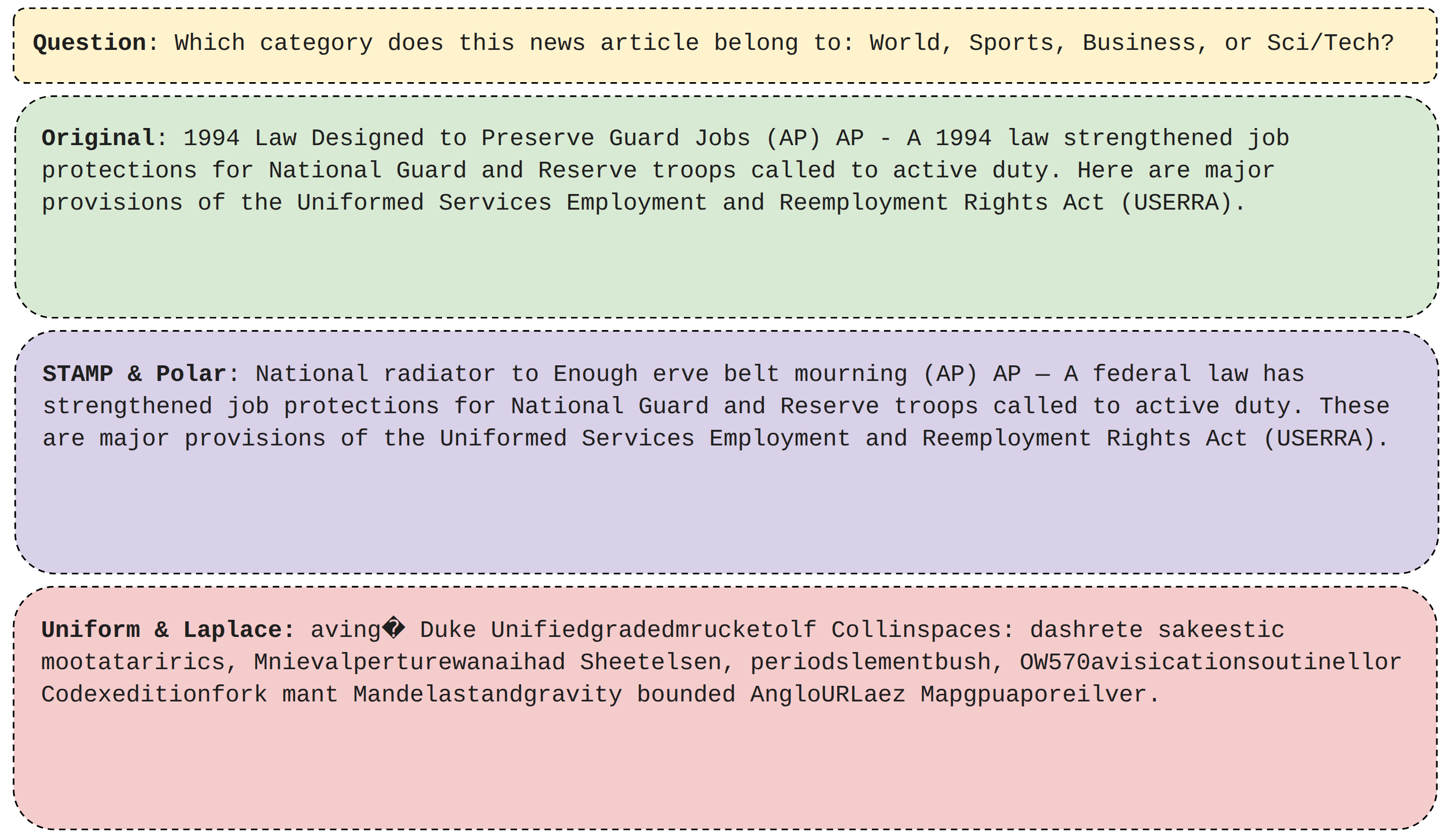}
        \caption{Qualitative comparison on an AG News article (Topic: Business) at a matched average privacy budget of $\bar{\epsilon} \approx 540$. 
        The privatized outputs shown (middle/bottom) are subsequently coherence-repaired via the same constrained rewrite process.
        The \textbf{Uniform \& Laplace} baseline (bottom) results in complete obfuscation, producing a string of unrelated tokens. 
        Meanwhile, \textbf{STAMP \& Polar} (middle) preserves key legal/employment terminology—including ``law strengthened job protections”, ``active duty”, and ``(USERRA)”—supporting correct topic classification under privacy constraints.}
    \end{figure*}

\end{document}